\newtheorem{problem}{Problem}
\newtheorem{remark}{Remark}
\newtheorem{theorem}{Theorem}
\DeclareMathOperator*{\argmin}{arg\,min} 
\let\n=\numprint
\journal{Ad Hoc Networks}
\begin{document}

\begin{frontmatter}

\title{A Semi-Lagrangian Approach for the Minimal Exposure Path Problem in Wireless Sensor Networks\tnoteref{mytitlenote}}
\tnotetext[mytitlenote]
{
    This work has been financed by the Coordenação de Aperfeiçoamento de Pessoal de Nível Superior - Brasil (CAPES) - Finance Code 001, Conselho Nacional de Desenvolvimento Científico e Tecnológico - Brasil (CNPq) - grant number 432241/2018-3, and the Fundação de Amparo à Pesquisa do Estado de Minas Gerais (FAPEMIG) - grant number APQ-01515-18.
}

\author[mymainaddress]{Armando Alves Neto\corref{mycorrespondingauthor}}
\cortext[mycorrespondingauthor]{Corresponding author}
\ead{aaneto@cpdee.ufmg.br}

\author[mymainaddress]{Víctor C. da Silva Campos}
\ead{victor@cpdee.ufmg.br}

\author[mysecondaryaddress]{Douglas G. Macharet}
\ead{doug@dcc.ufmg.br}

\address[mymainaddress]{Dept. of Electronics Engineering, Univ. Federal de Minas Gerais, Belo Horizonte, Brazil.}

\address[mysecondaryaddress]{Computer Vision and Robotics Laboratory (VeRLab), Dep. of Computer Science, Univ. Federal de Minas Gerais, Belo Horizonte, Brazil}

\begin{abstract}
    A critical metric of the coverage quality in \ac{WSNs} is the \ac{MEP}, a path through the environment that least exposes an intruder to the sensor detecting nodes.
    Many approaches have been proposed in the last decades to solve this optimization problem, ranging from classic (grid-based and Voronoi-based) planners to genetic meta-heuristics. However, most of them are limited to specific sensing models and obstacle-free spaces. Still, none of them guarantee an optimal solution, and the state-of-the-art is expensive in terms of run-time.
    Therefore, in this paper, we propose a novel method that models the \ac{MEP} as an \emph{Optimal Control} problem and solves it by using a \emph{Semi-Lagrangian} approach. This framework is shown to converge to the optimal \ac{MEP} while also incorporates different homogeneous and heterogeneous sensor models and geometric constraints (obstacles).
    Experiments show that our method dominates the state-of-the-art, improving the results by approximately \n[\%]{10} with a relatively lower execution time.
\end{abstract}

\begin{keyword}
    Wireless Sensor Network (WSN) \sep Minimal Exposure Path (MEP) \sep Policy Iteration \sep Dynamic Programming
\end{keyword}

\end{frontmatter}

\linenumbers

\newcommand{\escalar}[1]{\ensuremath{\mathit{#1}}}
\newcommand{\vetor}[1]{\ensuremath{\boldsymbol{#1}}}
\newcommand{\matriz}[1]{\ensuremath{\mathbf{\uppercase{#1}}}}
\newcommand{\conjunto}[1]{\ensuremath{\mathcal{\uppercase{#1}}}}
\newcommand{\distribuicao}[1]{\ensuremath{\mathcal{\uppercase{#1}}}}
\newcommand{\transpose}{\ensuremath{{}^\intercal}}
\newcommand{\espaco}[1]{\ensuremath{\mathds{\MakeUppercase#1}}}

\newcommand{\node}{\vetor{n}}
\newcommand{\sensorpos}{\vetor{s}}
\newcommand{\pos}{\vetor{p}}
\newcommand{\controle}{\vetor{u}}
\newcommand{\pinit}{\pos_{\textrm{init}}}
\newcommand{\pgoal}{\pos_{\textrm{goal}}}

\newcommand{\nnodes}{\escalar{n}}
\newcommand{\radius}{\escalar{r}}

\def\onefig{.65}
\def\threefig{.4}

\newcommand{\sensingModel}[1][\cdot]{\escalar{S}\!\left(#1\right)}
\newcommand{\intensityModel}[1][\cdot]{\escalar{I}\!\left(#1\right)}
\newcommand{\exposure}[1][\cdot]{\escalar{E}\!\left(#1\right)}
\newcommand{\iexposure}[1][\cdot]{\escalar{E}_{#1}}

\newcommand{\Vfunc}[1][\cdot]{\escalar{V}\!\left(#1\right)}
\newcommand{\bVfunc}[1][\cdot]{\overline{\escalar{V}}\!\left(#1\right)}
\newcommand{\ibVfunc}[2][\cdot]{\overline{\escalar{V}}_{#2}\!\left(#1\right)}

\newcommand{\Qfunc}[1][\cdot]{\escalar{Q}\!\left(#1\right)}

\newcommand{\dynaFunc}[1][\cdot]{\escalar{f}\!\left(#1\right)}

\newcommand{\nodeset}{\conjunto{N}}
\newcommand{\inputset}{\conjunto{U}}
\newcommand{\Reais}[1]{\ensuremath{\espaco{R}^{#1}}}

\section{Introduction}
\label{sec:introduction}


%

\ace{WSNs} are commonly used in a vast range of civilian and military applications and have been the focus of many studies. Such networks are constituted of multiple stationary wireless nodes with processor, memory, radio transceiver, power source, and a set of sensors used to collect data from the region where they have been deployed.
In this context, a fundamental aspect is the coverage of the \ac{WSN}, i.e., the monitoring quality of the network considering the dispersion of sensors and their properties. Most of the studies consider a full coverage of the area of interest, where every portion of the environment is within the sensing range of at least one sensor \cite{Mohamed2017Coverage}. However, different sensing models can be found in the literature~\cite{Ye2016Hybrid}, and each one represents a target detection differently.


Detection of mobile targets using \ac{WSNs} is a typical usage example of such systems, and it has been formulated as different problems in the literature, such as trap coverage~\cite{Balister2009TrapCoverage}, barrier coverage~\cite{Chen2013EnergyEfficient}, and minimum exposure path~\cite{Meguerdichian2001Exposure,Veltri2003Minimal}.
In the \ace{MEP} problem, the goal is to determine a path through the sensing field that connects two arbitrary positions and minimizes the likelihood of a target being detected by the network during its movement. This is critical since the exposure can also be used as a quality metric of the \ac{WSN}, and the \ac{MEP} represents the worst-case coverage performance. Moreover, this information allows to enhance the network during the design phase or to optimize and maintain it after deployment.
The \ac{MEP} problem has been the focus of many works in the literature and is generally tackled with grid-based approaches \cite{Meguerdichian2001Exposure}, methods based on the use of Voronoi diagrams \cite{Veltri2003Minimal}, and heuristic solutions such as evolutionary algorithms \cite{Ye2016Hybrid}.

%

\begin{figure}[!t]
    \centering
    \includegraphics[width=\onefig\linewidth]{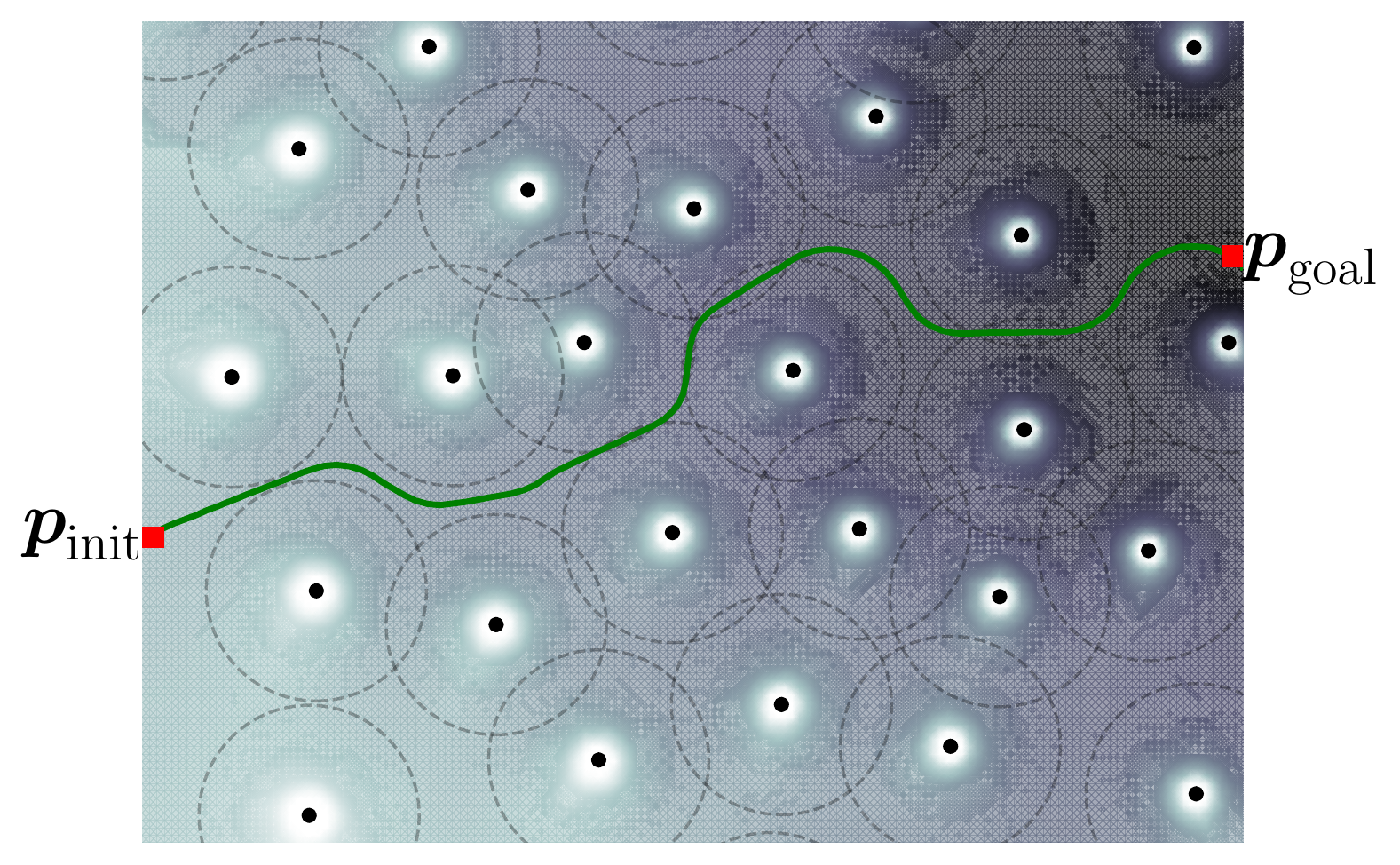}
    \caption{\protect\ac{MEP} problem: find a path between a source ($\pinit$) and destination ($\pgoal$) that minimizes the exposure of a moving target to a set of sensor nodes. Our approach computes a value function representing an \emph{exposure field} over the network, whose gradient approximates the optimal solution.}
    \label{fig:introduction}
\end{figure}

In this paper, we propose a formulation based on optimal control theory for determining optimal control inputs for an intruder in the \ac{MEP} problem. This approach uses the Dynamic Programming Principle to approximate the solution with a Semi-Lagrangian numerical scheme and policy iteration to accelerate the convergence.
Fig.~\ref{fig:introduction} illustrates a scenario filled with sensor nodes, over which we compute a \emph{value function} capable of leading an agent from $\pinit$ to $\pgoal$ while minimizing its exposure to the network topology.
%
%


The methodology was evaluated considering recently available benchmarks, and it overcame the state-of-the-art in all instances. Specifically, we highlight the following contributions:
\begin{itemize}
    \item we proposed a novel approach to solve the \ac{MEP} problem using an optimal control framework that ensures convergence to the optimal solution -- results are on average \n[\%]{10} better than the state-of-the-art.
    
    \item our formulation allows to incorporate the target's dynamics (which produces smoother solutions), as well as many different sensing models and intensity functions for homogeneous and heterogeneous network topologies;
    
    \item it also allows to model geometric constraints, such as obstacles, holes, and other non-navigable areas;
    
    \item the method is faster than the best one in the literature, even when employed for a large number of nodes -- it is even more efficient when computing other paths in the same scenario (for the same $\pgoal$).
\end{itemize}



The remainder of this paper is structured as follows: Sec.~\ref{sec:related_work} reviews existing literature; 
Sec.~\ref{sec:problem_formalization} introduces preliminary concepts and formulates the problem;
Sec.~\ref{sec:methodology} presents and details the proposed approach; 
Sec.~\ref{sec:experiments} shows numerical results; 
and finally, in Sec.~\ref{sec:conclusion}, we conclude and discuss avenues for future investigation.


\section{Related work}
\label{sec:related_work}


Coverage is one of the most fundamental factors regarding \ac{WSNs}, and it is related to problems such as deployment planning and network optimization. The coverage quality can be assessed, for example, by the exposure of an arbitrary path going through the sensing field, where a higher minimum exposure represents better coverage.

A typical scenario for the employment of \ac{WSNs} is related to the detection and tracking of mobile targets. Such a task is important, for example, in wildlife monitoring or, in the case of security applications, to detect possible intruders.
Problems such as trap coverage~\cite{Balister2009TrapCoverage, Chen2013Trapping} and barrier coverage~\cite{Chen2013EnergyEfficient} are usually concerned about the detection of the target that reaches the sensing range of a node, and the path it takes inside the monitored area is generally not considered. 
On the other hand, the \ace{MEP} problem allows for a broader understanding and representation of the strengths and weaknesses of the \ac{WSN} under consideration, as it serves to better characterize the behavior of a target accordingly to its movement inside the field \cite{Meguerdichian2001Exposure, Veltri2003Minimal}. It is usually formulated as a trajectory optimization problem, which considers the path exposure as the cost function to be minimized. 


Analytical solutions to the \ac{MEP} problem only exist in the trivial case where a single sensor node is considered~\cite{Feng2016Novel}. There is no exact solution to the multiple sensor nodes case, and it has not yet been proven that it is solvable under such circumstance \cite{Djidjev2007Efficient}. In this sense, different approximation solutions have been proposed to tackle this scenario \cite{Djidjev2010Approximation}.



Approaches tackling the \ac{MEP} problem are generally separated into two categories, depending on the node sensing function used: i) \emph{idealistic} models; and ii) \emph{realistic probability} models.
Idealistic models are more straightforward representations of the signal attenuation in \ac{WSNs}, and consequently, the \ac{MEP} can be solved by simpler algorithms. The first studies in this context used grid-based approaches and Voronoi diagrams~\cite{Veltri2003Minimal, Phipatanasuphorn2004Vulnerability, Djidjev2007Efficient}. The main problem with such methods is that, although easy to implement, they cannot be applied to scenarios where multiple sensors are used to detect the intruder (called the all-sensor intensity model). Also, when destination and source points do not lie in the Voronoi diagram, such approaches are unable to provide the optimal solution~\cite{Binh2019Efficient}.

Later, the results were improved by using hybrid evolutionary algorithms, and other biological-inspired solutions for fixed and mobile nodes \cite{Ye2016Hybrid, Nguyen2017Genetic}. 
For example, in \cite{Song2014Biology}, a \ac{POA}, that mimics the behavior of such organism, is used. The application of meta-heuristics provides accurate results since they usually do not rely on approximate cell/grid representations.
To improve performance and efficiency, the authors of \cite{Aravinth2021Hybrid} also proposed a \ac{HGA} based on particle swarm optimization. 



More realistic sensor models have been investigated in recent articles, such as directional nodes \cite{Liu2014Minimal} and sensing probability models \cite{Ye2015Hybrid}.
%
%
To tackle the drawbacks of grid-based and Voronoi-based approaches, \cite{Ye2016Hybrid} presents an \ac{HGA} based on a \ac{NFE} model, transforming the \ac{MEP} into a global optimization problem. Results overcame the state-of-the-art at that moment, however, similarly to other existing works, the method was based on a $m$ x $m$ grid, whose resolution influences the output precision.

Currently, the state-of-the-art in the \ac{MEP} problem is \cite{Binh2019Efficient}, which presents two approaches to solve it: (i) the \emph{GB-MEP}, a grid-based method with some modifications; and (ii) the \emph{GA-MEP}, based on a genetic algorithm with a featured individual representation and an effective combination of genetic operators.
Since it is based on a meta-heuristic algorithm, the GA-MEP provides better precision performance than the GB-MEP between \n[\%]{77} to \n[\%]{88} of the cases, depending on the nodes distribution (Gaussian distribution is only \n[\%]{40}). On the other hand, the GB-MEP is much faster, depending on the resolution used to compute $\exposure$.
Also, when compared with the genetic algorithm proposed in \cite{Ye2016Hybrid}, the GA-MEP dominates the \emph{HGA-NFE}, winning in all tested instances.


Other aspects of the environment and the sensor field are also important to be treated in the \ac{MEP} context. For example, in \cite{Ferrari2010Potential}, an artificial-potential approach that considers the presence of static obstacles and evaluates the paths for multiple vehicles is proposed. 
%
Besides also considering cluttered environments, in \cite{Liu2017Obstacle}, an adaptive cell decomposition approach capable of handling heterogeneous \ac{WSNs} is proposed. Obstacles and heterogeneous topologies are typically more realistic and difficult to deal with than other scenarios.

More recently, the \ac{MEP} formulation was combined with the \ac{DOP}, and a multi-objective evolutionary algorithm was proposed \cite{Macharet2021Minimal}. In this context, besides minimizing the exposure, the path is subjected to a bounded curvature and limited budget. Moreover, the goal is to visit a set of points that maximizes the total reward collected.


In this work, the \ac{MEP} problem is considered in an optimal control setting, and a Dynamic Programming Approach is employed to solve it. Considering a continuous-time agent, it amounts to solving a partial differential equation, known as the \acd{HJB} equation, which can be solved numerically, among other possibilities, by Semi-Lagrangian approximation schemes \cite{Falcone2013}. 
To handle possibly infinite values for the exposure, representing unreachable or forbidden regions on the search space (due to the presence of obstacles), we propose the use of a Kruzkov transformation \cite{Bardi1997, Falcone2013, Chilan2019}, which maps values in $[0, \infty)$ to the $[0,1]$ range. A suitable Dynamic Programming Principle is presented for this transformed case, which is employed to propose a Semi-Lagrangian Approximation scheme to solve the optimal control problem by discretizing it in time and employing an unstructured grid in space. Our approach is shown to converge to the optimal solution of the problem utilizing the Barles-Souganidis Theorem \cite[Theorem 2.1]{Barles1991}.


\section{Problem formulation}
\label{sec:problem_formalization}

In the \ac{MEP} problem, the exposure function $\exposure$  presents two main aspects. First, it depends on the model of the sensors composing the network, here referred to as the \emph{sensing function} $\sensingModel$. Second, it depends on how the combined energy of all sensors is computed at each location, defined as the \emph{intensity function} $\intensityModel$. Next, we discuss these aspects and formalize the \acd{MEP} as a problem of nonlinear optimization.

\subsection{Node sensing functions}
\label{subsec:sensing_models}

According to the literature, there are different sensing models used in \ac{WSNs}, but all of them depend on the Euclidean distance between the sensor location $\sensorpos$ and the position $\pos$ for which we must compute the detection. The most commonly used sensor functions are described below.

\begin{itemize}
    \item Boolean disk coverage model \cite{Ye2016Hybrid},
    \begin{equation}
        \sensingModel[\sensorpos, \pos] = 
        \begin{cases}
            1, & \text{if~} \|\sensorpos - \pos\| \leq \radius\\
            0, & \text{otherwise}
        \end{cases},
        \label{eq:boolean_disk_coverage_model}
    \end{equation}
    where $\radius$ is the critical sensing range. As the name suggests, it emulates an \emph{on-off} detection behavior, which might be used in very simple scenarios.
    
    \item Attenuated disk coverage model \cite{Ye2016Hybrid, Binh2019Efficient},
    \begin{equation}
        \sensingModel[\sensorpos, \pos] = \frac{\lambda}{\|\sensorpos - \pos\|^{\mu}},
        \label{eq:attenuated_disk_coverage_model}
    \end{equation}
    where $\lambda$ and $\mu$ are positive parameters related to the propagation and attenuation of the node signal. 
    
    \item Sensing probability model \cite{Ye2016Hybrid},
    \begin{equation}
        \sensingModel[\sensorpos, \pos] = e^{-\alpha \|\sensorpos - \pos\|^\beta},
        \label{eq:probability_coverage_model}
    \end{equation}
    where $\alpha$ and $\beta$ also represent positive parameters of signal attenuation.
    
    \item Probability coverage model with noise \cite{Deng2007Minimum,Binh2019Efficient},
    \begin{equation}
        \sensingModel[\sensorpos, \pos] = \frac{\lambda}{\|\sensorpos - \pos\|^{\mu}} + \eta,
        \label{eq:probability_coverage_model_noise}
    \end{equation}
    where $\eta$ represents an additive noise energy factor, generally modeled as a Gaussian distribution.   
\end{itemize}

Generally speaking, functions \eqref{eq:boolean_disk_coverage_model} and \eqref{eq:attenuated_disk_coverage_model} are known as \emph{idealistic models}, while \eqref{eq:probability_coverage_model} and \eqref{eq:probability_coverage_model_noise} are considered to be \emph{realistic probability} representations of the real-world. 
In \emph{homogeneous} networks, the same function and parameters are used to model all sensors, while in \emph{heterogeneous} topologies, they are distinct.

\subsection{Field intensity functions}
\label{subsec:intensity_function}

A second aspect of the sensor field is the impact of all $\nnodes$ sensors over the exposure function. Below we describe two main models found in the current literature.

\begin{itemize}
    \item All-sensor field intensity function \cite{Binh2019Efficient},
    \begin{equation}
        \intensityModel[\pos] = \sum_{i=1}^{\nnodes} \sensingModel[\sensorpos_i, \pos],
        \label{eq:intensity_allsensors}
    \end{equation}
    where the influence of all sensors in the field is incorporated in the computation of the exposure, no matter how far they are from $\pos$.
    
    \item Maximum-sensor intensity function \cite{Ye2016Hybrid},
    \begin{equation}
       \intensityModel[\pos] = \max_{i} \sensingModel[\sensorpos_{i}, \pos],
       \label{eq:intensity_maximumsensor}
    \end{equation}
    where only the influence of the node with the highest sensing function is used to compute the exposure. When the field is composed of homogeneous nodes, this sensor is the closest to $\pos$.
\end{itemize}

\subsection{Minimal Exposure Path}
\label{subsec:mep_problem}

Now we can compute the exposure $\exposure$ of a path $\pos(t)$, which is directly related to the \ac{WSN} ability to detect mobile targets traversing its sensing field. The exposure is defined as the time integral of the cumulative energy perceived by the sensors in the network \cite{Meguerdichian2001Exposure, Veltri2003Minimal}, according to a given intensity function. More formally:
\begin{equation}
    \exposure[\pos(t)] = \int_{0}^{t_f} \intensityModel[\pos(t)] \left| \dot{\pos}(t) \right| dt,
    \label{eq:exposure}
\end{equation}
\noindent where $t_f$ is the time the goal is reached. Here, $\pinit = \pos(0)$ represents the \emph{source position}, while $\pgoal = \pos(t_f)$ is the \emph{destination position}. In other words, a target moving from $\pinit$ to $\pgoal$ will be exposed to the node set according to \eqref{eq:exposure}.
%
%
%
%
Finally, the \ac{MEP} problem can be defined as finding a path throughout the environment that less exposes the target. Formally:
\begin{problem}[\acd{MEP}]
    Let $\nodeset = \{\node_i\}_{i=1}^\nnodes$ be a sensor field formed by a set of $\nnodes$ nodes distributed in a $\Reais{2}$ (cluttered) environment, each node represented by a sensing model $\sensingModel[\cdot]: \Reais{2} \rightarrow \Reais{}$. 
    In this context, the main goal is to compute a penetration path $\pos^*$ through $\nodeset$, leading from the source position $\pinit$ to the destination $\pgoal$, that minimizes the exposure \eqref{eq:exposure} of the target, i.e., 
    \begin{equation*}
        \pos^*(t) = \argmin_{\pos(t)} \exposure[\pos(t)].
    \end{equation*}
\end{problem}

\section{Methodology}
\label{sec:methodology}

\subsection{Numerical approximation scheme}

In this paper, we tackle the \ac{MEP} problem in an optimal control setting. First, let us consider a path generated by the dynamic system:
\begin{align}
    \dot{\pos}(t) = \controle(t), \text{~~with~~} \pos(0) = \pinit,
    \label{eq:dyn_sys}
\end{align}
\noindent where $\pos(t)$ is the target position and $\controle(t)$ the velocity vector over time. We consider that the set of admissible control inputs (velocities) is given by:
\begin{equation*}
    \inputset = \left\{\controle \in \Reais{m} ~ | ~ 0 < \controle_{\min} \leq \|\controle(t)\| \leq \controle_{\max} ~ \forall t \right\}.
\end{equation*}
With these definitions, exposure \eqref{eq:exposure} can be rewritten as:
\begin{equation*}
    \exposure[\pos(t),\controle(t)] = \int_0^{t_f} \intensityModel[\pos(t)] \|\controle(t) \| dt.
\end{equation*}
%

Now, we can also define the \emph{value function} $\Vfunc: \Reais{2} \rightarrow \Reais{}$ such that, for every point $\pos(t)$ in the space, determines the minimum exposure from $\pos(t)$ to $\pgoal$:
\begin{align}
    \Vfunc[\pos(t)] = \inf_{\controle \in \inputset} \exposure[\pos(t),\controle(t)]. \label{eq:value_original}
\end{align}
%
%
%
This value function admits the Dynamic Programming Principle (Optimality Principle) since any point along the path should be optimal. Then, it can be written as:
\begin{align}
    \hspace{-2mm}
    \Vfunc[\pos(t)] = \inf_{\controle \in \inputset} \!\left(\Vfunc[\pos_u(\Delta t)] +\! \int_0^{\Delta t} \!\intensityModel[\pos(t)] \|\controle(t) \| dt \right),
    \hspace{-1mm}
    \label{eq:value_function}
\end{align}
with $\pos_u(\Delta t)$ representing the point at time $\Delta t$ along the path, taken when considering velocities defined by $\controle(t)$. In the presented form, the value function can be approximated by employing a Semi-Lagrangian approach \cite{Falcone2013}.  

To deal with constraints along the path, such as obstacles and restricted zones, we consider that the value function must be infinite in those locations. In addition, assuming that the exposure is always non-negative, the value of the goal location must always be null. When considered together, these constraints lead to the boundary conditions:
\begin{align}
    \Vfunc[\pos(t)] = \left\{
    \begin{array}{ll} 
        0, & \textrm{for } \pos(t) = \pgoal \\ 
        \infty, & \textrm{for } \pos(t) \in \partial \mathcal{O} 
    \end{array}\right.,
    \label{eq:boundary_conditions}
\end{align}
with $\partial \mathcal{O}$ representing the boundaries of geometric forbidden regions. To better tackle these infinite values, we employ a rescaling of the value function (known as Kruzkov transformation \cite{Falcone2013}), such that:
\begin{equation}
    \bVfunc[\pos(t)] = 1 - e^{-\Vfunc[\pos(t)]}. 
    \label{eq:value_kruzkov}
\end{equation}
By making use of the original Dynamic Programming Principle of the value function, and concerning:
\begin{equation*}
    \iexposure[\Delta t,\controle] = \int_0^{\Delta t} \intensityModel[\pos(t)] \|\controle(t) \| dt
\end{equation*}
in this transformed setting, we have:
\begin{align}
    \bVfunc[\pos(t)] &= 1 \! - \! e^{-\inf\limits_{\controle \in \inputset} \left(V(\pos_u(\Delta t)) + \iexposure[\Delta t,\controle]\right)}, \nonumber\\
    &= 1 \! - \! \sup_{\controle \in \inputset}\left(e^{-V(\pos_u(\Delta t))}e^{-\iexposure[\Delta t,\controle]}\right), \nonumber\\
    &= 1 \! - \! \sup_{\controle \in \inputset} \! \left(\left(\!-1 \! + \! e^{-V(\pos_u(\Delta t))} \right) e^{-\iexposure[\Delta t,\controle]} \! + \! e^{-\iexposure[\Delta t,\controle]}\right), \nonumber\\
    \bVfunc[\pos(t)] &= 1 \! + \! \inf_{\controle \in \inputset} \! \left(\overline{V}(\pos_u(\Delta t)) \! - \! 1\right) e^{-\int\limits_0^{\Delta t} \intensityModel[\pos(t)] \|\controle(t) \| dt}. \hspace{-2mm}
    \label{eq:dpp_kruzkov}
\end{align}

Eq.~\eqref{eq:dpp_kruzkov} can be seen as the Dynamic Programming Principle for this transformed $\bVfunc$, and we can also employ a Semi-Lagrangian numerical scheme to approximate the solution, by employing a time discretization, followed by a space discretization. 


We consider the time discretization of \eqref{eq:dpp_kruzkov} by applying a trapezoidal approximation for the integral term, and a trapezoidal method to solve the system of equations composed of \eqref{eq:dyn_sys}. By considering a time step of $\Delta t$, this leads to:
\begin{equation}
    \ibVfunc[\pos_k]{k} = 1 \! + \! \inf_{\controle_k \in \inputset}\left(\ibVfunc[\pos_{k+1}]{k+1} - 1\right)e^{-g_k(\pos_k,\controle_k)}, 
    \label{eq:dpp_discrete}
\end{equation}
\noindent with
\begin{equation}
    g_k(\pos_k,\controle_k) = \tfrac{1}{2} \Big( \intensityModel[\pos_k] + \intensityModel[\pos_{k+1}] \Big) \|\controle_k\|\Delta t, 
    \label{eq:gk}
\end{equation}
and 
\begin{equation}
    \pos_{k+1} = \pos_{k} + \controle_k \Delta t. 
    \label{eq:time_step}
\end{equation}


Afterwards, we perform the space discretization of $\bVfunc$, by considering an unstructured grid of points covering the space. Since these points will be used to represent $\bVfunc$, they are the only points over the space for which the value is updated. Since $\ibVfunc[\pos_{k+1}]{k+1}$ might not be a part of the grid, it is replaced by a finite element linear interpolation over the grid. 
In this work, we have employed the Delaunay triangulation on the unstructured grid points to find a triangulation of the space (see Fig.\ref{fig:grid_points}). We consider these triangles as our finite elements, and represent the interpolation of $\ibVfunc[\pos_{k+1}]{k+1}$ as $\mathcal{I}_{\overline{V}_{k+1}}[\pos_{k+1}]$. 
Taken together, both discretizations (time and space) lead to a Semi-Lagrangian approximation scheme of \eqref{eq:dpp_kruzkov}, in the form:
\begin{equation}
    \ibVfunc[\pos_k]{k} = 1 + \inf_{\controle_k \in \inputset}\left(\mathcal{I}_{\overline{V}_{k+1}}[\pos_{k+1}] - 1\right)e^{-g_k(\pos_k,\controle_k)}, 
    \label{eq:dpp_SL}
\end{equation}
with boundary conditions
\begin{equation}
    \ibVfunc[\pos_k]{k} = 
        \left\{\begin{array}{ll}
            0, &\textrm{if } \pos_k = \pgoal \\ 
            1, &\textrm{if } \pos_k \in \partial \mathcal{O} 
        \end{array}\right.. 
    \label{eq:boundary}
\end{equation}

Eq.~\eqref{eq:dpp_SL} can be directly employed in a backwards in time-marching scheme, known as \emph{value iteration}, to find an approximate solution to \eqref{eq:dpp_kruzkov}. Since, in the way the problem has been presented, we are interested in stationary/infinite horizon solutions, we have employed an acceleration technique known as \emph{policy iteration} \cite[Section 8.4.7]{Falcone2013}. In this technique, we alternate between finding an optimal policy $\controle_k$ and an optimal value $\bVfunc$. 
At every grid point, the optimal policy is:
\begin{align}
    \controle_k = \argmin_{\controle_k \in \inputset} \left( \mathcal{I}_{\overline{V}}[\pos_{k+1}] - 1\right)e^{-g_k(\pos_k,\controle_k)}, \label{eq:opt_policy}
\end{align}
and fixed for this iteration. Afterwards, the value function is updated according to \eqref{eq:boundary} and
\begin{equation*}
    \bVfunc[\pos_k] - \mathcal{I}_{\overline{V}}[\pos_{k+1}]e^{-g_k(\pos_k,\controle_k)} = 1  - e^{-g_k(\pos_k,\controle_k)}. 
\end{equation*}
These steps are repeated until the algorithm converges to the minimum of the value function. 
The \acd{MEP} is then found by integrating the system trajectory \eqref{eq:time_step}, starting at $\pos_0 = \pinit$ and employing the optimal policy given by \eqref{eq:opt_policy}, until the path reaches $\pgoal$. 
The original value function in \eqref{eq:value_original} can be recovered by:
\begin{equation*}
    \Vfunc[\pos] = -\ln \left(1 - \bVfunc[\pos] \right).
\end{equation*}

\subsection{Convergence analysis}

The Dynamic Programming Principle of the transformed value function in \eqref{eq:dpp_kruzkov} can be recast as a partial differential equation, known as the \ac{HJB} equation, of the form:
\begin{equation}
    \sup_{\controle \in \inputset} \Big( \bVfunc[\pos] \ell(\pos,\controle) - \nabla \bVfunc[\pos] \cdot \dynaFunc[\pos,\controle] - \ell(\pos,\controle) \Big) = 0 \label{eq:HJB_kruzkov}
\end{equation}
with $\ell(\pos,\controle)$ being the cost, and $\dynaFunc[\pos,\controle]$ defining the system dynamics, given by:
\begin{align}
    \ell(\pos,\controle) &= \intensityModel[\pos] \|\controle\|, \label{eq:ell} \\
    \dynaFunc[\pos,\controle] &= \controle. \nonumber
\end{align}

Having defined this partial differential representation, we are now ready to state the main result concerning the convergence of our proposed methodology.

\begin{theorem}
    Let our optimal control problem be represented by the \ac{HJB} equation \eqref{eq:HJB_kruzkov}. As long as $\ell(\pos,\controle)$ and $\dynaFunc[\pos,\controle]$ are Lipschitz continuous in $\pos$, and $\ell(\pos,\controle) > 0 ~~ \forall \pos, \controle$, there is a unique viscosity solution to \eqref{eq:HJB_kruzkov}, corresponding to the optimal solution to the \ac{MEP} problem. In addition, the proposed numerical solution scheme converges to this unique viscosity solution as the time step, $\Delta t$, and the maximum distance between points on the grid, $\Delta \pos$, tend to zero, so long as $\Delta \pos$ tends faster than $\Delta t$.
\end{theorem}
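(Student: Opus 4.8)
The plan is to split the theorem into two logically independent parts — (i) existence and uniqueness of a viscosity solution to \eqref{eq:HJB_kruzkov} that coincides with the optimal value of the \ac{MEP}, and (ii) convergence of the Semi-Lagrangian scheme \eqref{eq:dpp_SL} to that solution — and to invoke, respectively, the standard viscosity-solution theory for Hamilton-Jacobi-Bellman equations (as in \cite{Bardi1997, Falcone2013}) and the Barles-Souganidis framework \cite[Theorem 2.1]{Barles1991}. For part (i), I would first observe that, under the hypotheses $\ell(\pos,\controle)=\intensityModel[\pos]\|\controle\|$ Lipschitz in $\pos$, $\dynaFunc[\pos,\controle]=\controle$ trivially Lipschitz, and $\ell>0$, the original value function $\Vfunc$ from \eqref{eq:value_original} is the (minimal-time-type) value of a Mayer/Bolza problem with strictly positive running cost and a target set. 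The strict positivity of $\ell$, together with the velocity bounds $0<\controle_{\min}\le\|\controle\|\le\controle_{\max}$, gives small-time controllability toward $\pgoal$ and a positive lower bound on cost away from the target, which are exactly the structural conditions guaranteeing that $\Vfunc$ is the unique continuous viscosity solution of its \ac{HJB} equation with the boundary datum \eqref{eq:boundary_conditions}. I would then show that the Kruzkov change of variables \eqref{eq:value_kruzkov}, $\bVfunc=1-e^{-\Vfunc}$, is a smooth increasing bijection of $[0,\infty]$ onto $[0,1]$, so it transports viscosity sub/supersolutions to viscosity sub/supersolutions: substituting $\Vfunc=-\ln(1-\bVfunc)$ and $\nabla\Vfunc=\nabla\bVfunc/(1-\bVfunc)$ into the untransformed \ac{HJB} equation and multiplying through by $(1-\bVfunc)>0$ yields precisely \eqref{eq:HJB_kruzkov}; hence $\bVfunc$ is the unique viscosity solution of \eqref{eq:HJB_kruzkov} with boundary data \eqref{eq:boundary}, and it corresponds to the optimal \ac{MEP} because the transformation is monotone and thus preserves minimizers.

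For part (ii), I would verify the three Barles-Souganidis hypotheses for the scheme \eqref{eq:dpp_SL} written in the abstract form $S(\Delta t,\Delta\pos,\pos_k,\ibVfunc[\pos_k]{k},\overline V)=0$. \emph{Monotonicity}: because $e^{-g_k}\in(0,1]$, the map $w\mapsto 1+\inf_{\controle_k}(\,\mathcal{I}_{\overline V_{k+1}}[\pos_{k+1}]-1\,)e^{-g_k}$ is nondecreasing in the nodal values entering the linear (finite-element) interpolation $\mathcal{I}$, since Delaunay-based $P_1$ interpolation has nonnegative barycentric weights summing to one; this is where the unstructured-grid choice matters. \emph{Stability}: the scheme maps $[0,1]$-valued data to $[0,1]$-valued data — the infimum of a convex combination of the number $1$ and interpolated values in $[0,1]$, shifted, stays in $[0,1]$ — so the family of solutions is uniformly bounded independent of $\Delta t,\Delta\pos$. \emph{Consistency}: a Taylor expansion of \eqref{eq:dpp_SL} around a smooth test function $\varphi$, using $\pos_{k+1}=\pos_k+\controle_k\Delta t$, $g_k=\intensityModel[\pos_k]\|\controle_k\|\Delta t+O(\Delta t^2)$ from the trapezoidal rule \eqref{eq:gk}, $e^{-g_k}=1-\intensityModel[\pos_k]\|\controle_k\|\Delta t+O(\Delta t^2)$, and the interpolation error $\mathcal{I}_{\varphi_{k+1}}[\pos_{k+1}]=\varphi(\pos_{k+1})+O\!\big((\Delta\pos)^2/\Delta t\big)$ for $P_1$ elements of size $\Delta\pos$ on a mesh of the gradient, shows that dividing the scheme residual by $\Delta t$ recovers $\sup_{\controle}\big(\varphi\,\ell-\nabla\varphi\cdot\dynaFunc-\ell\big)$ up to terms of order $\Delta t+(\Delta\pos)^2/\Delta t$; this is exactly why the hypothesis ``$\Delta\pos$ tends to zero faster than $\Delta t$'' (more precisely $(\Delta\pos)^2/\Delta t\to 0$) is needed for consistency. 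Given monotonicity, stability, consistency, and the strong comparison principle established in part (i), Barles-Souganidis yields locally uniform convergence of the numerical solution to $\bVfunc$, and the recovered $\Vfunc=-\ln(1-\bVfunc)$ converges to the optimal \ac{MEP} value; convergence of the synthesized path then follows from convergence of the value function together with the optimality of the policy \eqref{eq:opt_policy}.

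The main obstacle I anticipate is the consistency step on the \emph{unstructured} grid: the classical Semi-Lagrangian consistency estimate is clean on uniform Cartesian grids, but with a Delaunay triangulation the interpolation error constant depends on mesh regularity (minimal angle / shape-regularity of the triangles), and one must state $\Delta\pos$ as the maximal circumradius (or edge length) together with a uniform shape-regularity assumption so that the $O((\Delta\pos)^2/\Delta t)$ bound is uniform; the scaling requirement ``$\Delta\pos$ faster than $\Delta t$'' is precisely the condition that kills this ratio. A secondary, more technical point is handling the boundary datum at $\partial\mathcal O$ and at $\pgoal$ in the viscosity-solution sense (these are state constraints / Dirichlet conditions that need the solution to attain them in the generalized sense), and ensuring the comparison principle holds up to the boundary; here I would lean on the small-time controllability from the velocity lower bound $\controle_{\min}>0$ to guarantee the boundary condition at $\pgoal$ is attained continuously, and treat $\partial\mathcal O$ as an obstacle where $\bVfunc\equiv 1$ is consistent with the equation.
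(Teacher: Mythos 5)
Your proposal follows the same overall skeleton as the paper's proof: establish uniqueness of the viscosity solution via a comparison principle under the Lipschitz and positivity hypotheses, then verify the Barles--Souganidis hypotheses for the scheme \eqref{eq:dpp_SL}; your monotonicity step (nonnegative $P_1$ barycentric weights under the $\inf$) is exactly the paper's argument. Two of the three verification steps, however, take genuinely different routes. Where you prove $L^\infty$-stability (the scheme maps $[0,1]$-valued data into $[0,1]$), the paper instead proves that the scheme is a contraction with rate $e^{-\overline{g}\Delta t}$, using $\ell > 0$ to bound $g_k$ away from zero, and invokes the Banach fixed-point theorem. Since the problem is posed as a stationary one solved by value/policy iteration, the contraction argument does extra work that your stability bound does not: it guarantees that the discrete fixed point exists and that the iteration converges to it, which your write-up would still need to supply. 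For consistency, you run the classical test against smooth test functions via Taylor expansion, obtaining a residual of order $\Delta t + (\Delta\pos)^2/\Delta t$; the paper instead performs a direct three-way error estimate between the continuous, time-discrete, and fully discrete value functions using only Lipschitz continuity of $\overline{V}$, arriving at $C_5\Delta t + C_6\,\Delta\pos/\Delta t$ and hence the coupling $\Delta\pos = \Delta t^2$. Your version buys a formally sharper order (and would tolerate the weaker coupling $(\Delta\pos)^2/\Delta t \to 0$), but it presumes $C^2$ interpolation targets; the paper's version is valid for the merely Lipschitz $\overline{V}$ actually at hand and directly yields the ``$\Delta\pos$ faster than $\Delta t$'' condition in the theorem statement. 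Your additional care with the Kruzkov change of variables transporting viscosity sub/supersolutions, with mesh shape-regularity, and with the boundary data at $\pgoal$ and $\partial\mathcal{O}$ addresses points the paper leaves implicit, and is a worthwhile strengthening rather than a deviation.
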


\begin{proof}
Note that $\ell(\pos,\controle)$ and $\dynaFunc[\pos,\controle]$ being Lipschitz continuous and $\ell(\pos,\controle) > 0 ~~ \forall \pos, \controle$, are sufficient conditions to ensure that there is a unique viscosity solution to \eqref{eq:HJB_kruzkov}, and that the problem admits a comparison principle \cite{Bardi1997}. In this case, from the Barles-Souganidis Theorem \cite[Theorem 2.1]{Barles1991}, as long as our numerical approximation scheme is \emph{monotone}, a \emph{contraction mapping}, and \emph{consistent}, it converges to the unique viscosity solution of the \ac{HJB} equation \cite{Falcone2013}. As such, the remainder of this proof shows these three properties of the approximation scheme in \eqref{eq:dpp_SL}. 

\subsubsection{Monotonicity}

Consider two functions $\overline{W}$ and $\overline{V}$, with $\overline{W} \leq \overline{V}$ for every point on the grid. Suppose that the $\inf$ operator in \eqref{eq:dpp_SL} is attained by $\overline{\mathbf{w}}$ for $\overline{W}$, and $\overline{\controle}$ for $\overline{V}$. It follows that:
\begin{align}
    \overline{W}_{k}(\pos_k) &\leq 1 + \left(\mathcal{I}_{\overline{W}}[\pos_{k+1}] - 1\right)e^{-g_k(\pos_k,\overline{\controle})}, \nonumber \\
    \ibVfunc[\pos_k]{k} - \overline{W}_{k}(\pos_k) &\geq e^{-g_k(\pos_k,\overline{\controle})} \left(\mathcal{I}_{\overline{V}}[\pos_{k+1}] - \mathcal{I}_{\overline{W}}[\pos_{k+1}]\right), \nonumber
\end{align}
which implies that $\ibVfunc[\pos_k]{k} - \overline{W}_{k}(\pos_k) \geq 0$ since we employed a linear interpolation.

\subsubsection{Contractiveness} 

Considering two functions $\overline{W}$ and $\overline{V}$, with $\overline{\mathbf{w}}$ minimizing $\overline{W}$. It follows that:
\begin{align}
    \ibVfunc[\pos_k]{k} - \overline{W}_{k}(\pos_k) &\leq e^{-g_k(\pos_k,\overline{\mathbf{w}})} \left(\mathcal{I}_{\overline{V}}[\pos_{k+1}] - \mathcal{I}_{\overline{W}}[\pos_{k+1}]\right), \nonumber\\
    \ibVfunc[\pos_k]{k} - \overline{W}_{k}(\pos_k) &\leq e^{-g_k(\pos_k,\overline{\mathbf{w}})} \|\overline{V} - \overline{W}\|_\infty^{(k+1)}, \nonumber
\end{align}
with $\|\overline{V} - \overline{W}\|_\infty^{(k+1)}$ being the maximum of the error between the two functions in the next time step. Since we are assuming that $\ell(\pos,\controle) > 0$, then $g_k(\pos_k,\controle_k) > \overline{g}\Delta t > 0 ~~ \forall \pos_k, \controle_k$. Since a similar bound can be found for $\overline{W}_{k}(\pos_k) - \ibVfunc[\pos_k]{k}$, then:
\begin{align}
    \|\overline{V} - \overline{W}\|_\infty^{(k)} \leq e^{-\overline{g}\Delta t} \|\overline{V} - \overline{W}\|_\infty^{(k+1)}. \nonumber
\end{align}
As we solve the problem back in time, this shows that our approximation scheme is a contraction mapping, with contraction rate $e^{-\overline{g}\Delta t}$. From the Banach fixed-point Theorem, it guarantees that our approximation scheme converges to a unique solution.

\subsubsection{Consistency}

We start our consistency analysis by considering the error of time discretization, comparing solutions from \eqref{eq:dpp_kruzkov} and \eqref{eq:dpp_discrete}. If we consider that the $\inf$ operator is attained by $\overline{\controle}$ in \eqref{eq:dpp_discrete}, it follows that:
\begin{align}
    \bVfunc[\pos(t)] - \ibVfunc[\pos_k]{k} \leq& \left(\bVfunc[\pos_{\overline{\controle}}] - 1\right)e^{-\int_0^{\Delta t} \intensityModel[\pos(t)] \|\overline{\controle}(t) \| dt} - \left(\ibVfunc[\pos_{k+1}]{k+1} - 1\right)e^{-g_k(\pos_k,\overline{\controle})}. \nonumber
\end{align}
Some algebraic manipulations lead to:
\begin{align}
    \bVfunc[\pos(t)] - \ibVfunc[\pos_k]{k} &\leq \left(\bVfunc[\pos_{\overline{\controle}}] - 1\right)\left(e^{-\int_0^{\Delta t} \intensityModel[\pos(t)] \|\controle(t) \| dt} - e^{-g_k(\pos_k,\overline{\controle})}\right) \notag \\
    &+ e^{-g_k(\pos_k,\overline{\controle})}\left(\bVfunc[\pos_{k+1}] - \ibVfunc[\pos_{k+1}]{k+1}\right) \notag \\
    &+ e^{-g_k(\pos_k,\overline{\controle})} \left(\bVfunc[\pos_{\overline{\controle}}] - \bVfunc[\pos_{k+1}]\right). \nonumber
\end{align}
From the dynamics considered in \eqref{eq:dyn_sys}, $\pos_{\overline{\controle}} = \pos_{k+1}$, and by making use of the Mean Value Theorem, $g_k(\pos_k,\controle_k) > \overline{g}\Delta t$, and the fact that \cite[Corollary 1.4]{Cruz2002}, for a Lipschitz function $\ell(t)$ we have that the error of a trapezoidal integration is bounded by $\frac{\Delta t^2}{8}\left(\sup \dot{\ell} - \inf \dot{\ell}\right)$. It follows that:
\begin{align}
    \bVfunc[\pos(t)] - \ibVfunc[\pos_k]{k} \leq C_1 \Delta t^2 + e^{-\overline{g}\Delta t} \|\overline{V} - \overline{V}_k\|_\infty,
    \label{eq:Vtime_1}
\end{align}
for some constant $C_1$. If we consider that the $\inf$ operator is attained by $\controle^\ast$ in \eqref{eq:dpp_kruzkov}, and that:
\begin{align}
    \hat{\controle}_k = \frac{1}{\Delta t} \int_0^{\Delta t} \controle^\ast(\tau) d\tau \nonumber
\end{align}
is the control obtained by the mean of the optimal control over a time step, it follows that
\begin{align}
    \ibVfunc[\pos_k]{k} - \bVfunc[\pos(t)] \leq& \left(\ibVfunc[\pos_{k+1}]{k+1} - 1\right)e^{-g_k(\pos_k,\hat{\controle}_k)} \notag \\
    &-\left(\overline{V}(\pos_{\controle^\ast}) - 1\right)e^{-\int_0^{\Delta t} \intensityModel[\pos(t)] \|\controle^\ast(t) \| dt}. \nonumber
\end{align}
Since, for the particular case of the dynamics considered in \eqref{eq:dyn_sys}, $\pos_{\controle^\ast} = \pos(t) + \hat{\controle}_k \Delta t$, following similar arguments to the ones used in obtaining \eqref{eq:Vtime_1}, we have:
\begin{align}
    \ibVfunc[\pos_k]{k} - \bVfunc[\pos(t)] \leq C_2 \Delta t^2 + e^{-\overline{g}\Delta t} \|\overline{V} - \overline{V}_k\|_\infty, \nonumber
\end{align}
with $C_2$ some constant term. This implies that:
\begin{align}
    (1 - e^{-\overline{g}\Delta t})\|\overline{V} - \overline{V}_k\|_\infty &\leq C \Delta t^2, \nonumber \\
    \|\overline{V} - \overline{V}_k\|_\infty &\leq C \Delta t,
    \label{eq:error_time}
\end{align}
with $C = \max(C_1,C_2)$.

For the space discretization error, we analyze the errors of the value function on the grid points, by comparing \eqref{eq:dpp_discrete} and \eqref{eq:dpp_SL}. To differentiate them, we will denote the value on the grid points of \eqref{eq:dpp_SL} by $\overline{v}_k$. If we consider that the $\inf$ is attained by $\overline{\controle}$ in \eqref{eq:dpp_discrete}, it follows that:
\begin{align}
    \overline{v}_k(\pos_k) \! - \! \ibVfunc[\pos_k]{k} \leq& \left(\mathcal{I}_{\overline{v}_{k+1}}[\pos_{k+1}] \! - \! \ibVfunc[\pos_{k+1}]{k+1}\right)e^{-g_k(\pos_k,\overline{\controle})}, \nonumber
\end{align}
which, by considering that $g_k(\pos_k,\overline{\controle}) > \overline{g}\Delta t$, leads to:
\begin{align}
    \overline{v}_k(\pos_k) - \ibVfunc[\pos_k]{k} &\leq e^{-\overline{g}\Delta t}\left|\mathcal{I}_{\overline{v}_{k+1}}[\pos_{k+1}] - \mathcal{I}_{\overline{V}_{k+1}}[\pos_{k+1}]\right| \notag \\
    &+ e^{-\overline{g}\Delta t}\left|\mathcal{I}_{\overline{V}_{k+1}}[\pos_{k+1}] - \ibVfunc[\pos_{k+1}]{k+1}\right|. 
    \label{eq:err_v_space}
\end{align}
Since $\overline{V}$ is Lipschitz, then:
\begin{align}
    \left|\mathcal{I}_{\overline{V}_{k+1}}[\pos_{k+1}] - \ibVfunc[\pos_{k+1}]{k+1}\right| \leq C_3 \Delta \pos, \nonumber
\end{align}
with $C_3$ being a constant and $\Delta \pos$ the largest distance between any point and a grid point. Combining this inequality with \eqref{eq:err_v_space}, we have that:
\begin{align}
    \overline{v}_k(\pos_k) - \ibVfunc[\pos_k]{k} \leq& e^{-\overline{g}\Delta t} \|\overline{v} - \overline{V}_k\|_\infty + e^{-\overline{g}\Delta t} C_3 \Delta \pos. \nonumber
\end{align}
Since a similar bound can be found for $\ibVfunc[\pos_k]{k} - \overline{v}_k(\pos_k)$, then:
\begin{align}
    \|\overline{v} - \overline{V}_k\|_\infty \leq& e^{-\overline{g}\Delta t} \|\overline{v} - \overline{V}_k\|_\infty + C_4 \Delta \pos, \nonumber\\
    \|\overline{v} - \overline{V}_k\|_\infty \leq& C_4 \frac{\Delta \pos}{\Delta t} \label{eq:error_space}
\end{align}
with $C_4$ a constant. Combining \eqref{eq:error_time} and \eqref{eq:error_space}, we have that:
\begin{align}
    \|\overline{V} - \overline{v}\|_\infty \leq C_5 \Delta t + C_6 \frac{\Delta \pos}{\Delta t}, 
    \label{eq:consistency}
\end{align}
with $C_5$ and $C_6$ being constants. As suggested in \cite{Falcone2013}, the best coupling between $\Delta t$ and $\Delta \pos$, in this case, is given by $\Delta \pos = \Delta t^2$, indicating that our grid resolution should be finer than our time discretization resolution.

Since we have shown that our scheme is monotone, a contraction mapping, and consistent, from the Barles-Souganidis Theorem, we have proven its convergence.
\end{proof}

\begin{remark}
    Although our consistency analysis considered the specific case of the system dynamics given by \eqref{eq:dyn_sys}, similar bounds can be found for other dynamics (as long as $\dynaFunc[\pos,\controle]$ is Lipschitz), by considering the numerical integration error of solving the ordinary differential equation by the method being used).
\end{remark}

\begin{remark}
    Note that our convergence results demand that $\ell(\pos,\controle)$ in \eqref{eq:ell} is Lipschitz continuous, which demands that the Boolean disk coverage model \eqref{eq:boolean_disk_coverage_model} be approximated by a Lipschitz continuous function and that a maximum value be allowed for the Attenuated disk coverage model \eqref{eq:attenuated_disk_coverage_model} and for the Probability coverage model with noise in \eqref{eq:probability_coverage_model_noise}. In addition to this, the fact $\ell(\pos,\controle) > 0, \; \forall \pos,\controle$ requires that the intensity of the sensor field does not vanish at any point, which is essential to ensure that the problem admits a unique optimal solution.
\end{remark}

\subsection{Implementation details}


As previously discussed, the approach is composed of a time and a space discretization, and our results converge to the optimal solution as these discretization errors decrease. 
Concerning the time discretization, if the time-step $\Delta t$ is too small, it could slow down the convergence of the iterations when \emph{value iteration} is used. This problem is somewhat mitigated when employing the \emph{policy iteration}, but a suitable time-step must still be chosen. For the experiments in the next section, we have set $\Delta t = \n{0.1}$.

Although the proposed approach could be employed with a regular structured grid on the space, using rectangles as finite elements for the interpolation, unstructured grids allow for a better representation of the nonlinear profile of the node sensing functions and the transformed value function in \eqref{eq:value_kruzkov}. It also allows representing obstacles with arbitrary geometries. 
In that regard, we have used a simple heuristic to sample grid points, concentrating them near the nodes and obstacle boundaries, as shown in Fig.~\ref{fig:grid_points}. Basically, the higher the exposure value at $\pos$, the higher is the chance of it receiving a grid point. For each sensor node, we have used about \n{100} grid points.

\begin{figure}[t]
    \centering
    \includegraphics[width=\onefig\linewidth]{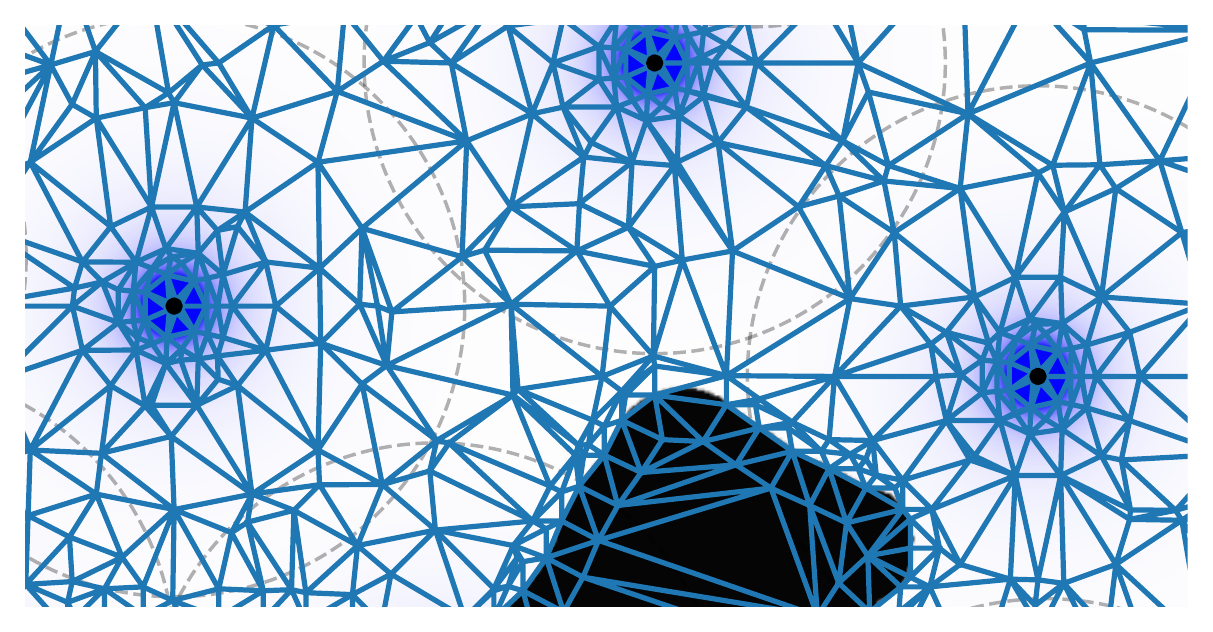}
    \caption{Spatial grid discretization: points are more concentrated around nodes (black dots) and in the obstacles boundaries (black regions).}
    \label{fig:grid_points}
    \vspace{-4mm}
\end{figure}

To solve the minimization problem in \eqref{eq:opt_policy}, we perform an exhaustive search within a discrete set of allowable velocities. Although this is not the most efficient approach, it avoids local minima.

If values of \eqref{eq:value_original} are too high, usually caused by having some sensing node too close to the goal, 
there may be numerical problems working with the Kruzkov transformed value function in \eqref{eq:value_kruzkov}, as the values might get too close to 1. To mitigate this problem, we might consider the rescaling of the intensity functions \eqref{eq:intensity_allsensors} and \eqref{eq:intensity_maximumsensor}, by dividing it by $\omega$, leading to:
\begin{align}
    \overline{I}(\pos) = \frac{\intensityModel[\pos]}{\omega}, && \omega \geq 1,
\end{align}
employed in \eqref{eq:gk} instead of the original intensity function. For the experiments in the next section, we have set $\omega = \n{100}$.

Finally, even though our conditions converge to the optimal solution, in practice, we have a sub-optimal path, and a local optimization procedure is performed over the path found by the policy in \eqref{eq:opt_policy}. This procedure checks along the path, described by a series of points in discrete-time, if each point could be substituted by a better point close to it, and replaces it if that is the case.



\section{Experiments}
\label{sec:experiments}

In this section, we first show an illustrative example to discuss different aspects of our approach. Next, we compare our results with the state-of-the-art literature considering recently available benchmarks. Finally, we provide results with cluttered environments and heterogeneous sensor nodes.

\subsection{Illustrative example}
\label{subsec:illustrative}

We begin by employing our method to the scenario presented in \cite[Section VII.A]{Ye2016Hybrid}. It consists of a network topology with 32 nodes, modeled as attenuated disk coverage functions \eqref{eq:attenuated_disk_coverage_model} with $\lambda = 4$ and $\mu = 2$. They have also adopted the maximum-sensor intensity function \eqref{eq:intensity_maximumsensor}. Considering a \n[m]{10} x \n[m]{10} space, destination point was set to $\pgoal = [10, 6.5]$, while source position was initially set to $\pinit = [0, 4]$. Subsequently, we have also determined paths for $\pinit = [0, 8]$ and $\pinit = [5, 0]$.
Fig.~\ref{fig:illustrative_example} presents the value function (gray-map background) and the \ac{MEPs} computed by our approach when applied to the aforementioned scenario, concerning all three source positions. The exposure values and execution times are shown at Tab.~\ref{tab:illustrative_results}.
\begin{figure}[!t]
    \centering
    \includegraphics[width=\onefig\linewidth]{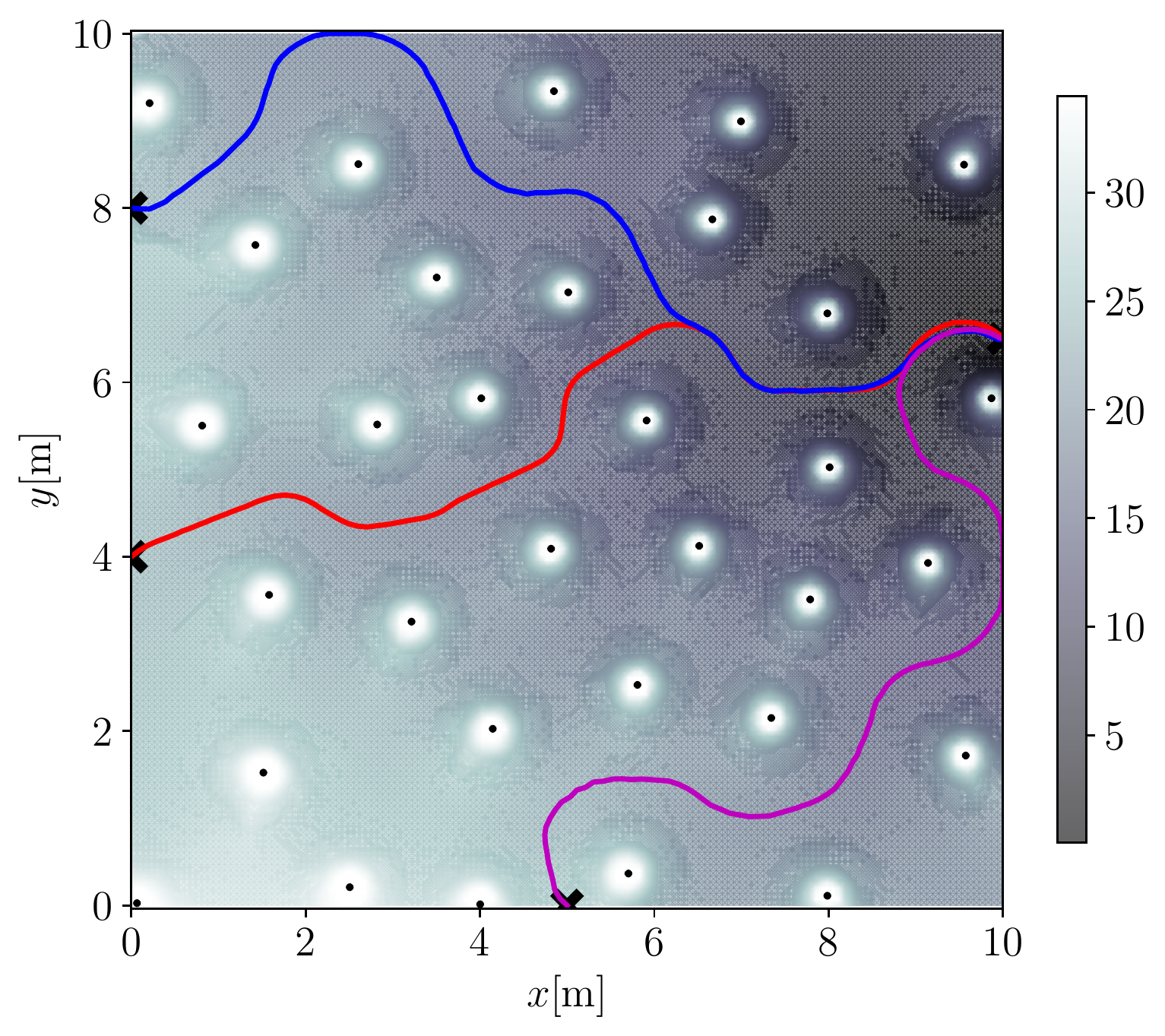}
    \caption{Example scenario (presented in \cite{Ye2016Hybrid}). The black dots are the nodes locations, while the color-map represents $\Vfunc[\pos]$. We computed three different \protect\ac{MEP}s to $\pgoal = [10, 6.5]$, the first one starting from $\pinit = [0, 4]$ (red line), and other two beginning from $\pinit = [0, 8]$ and $[5, 0]$.}
    \label{fig:illustrative_example}
\end{figure}

{
\nprounddigits{3}
\npdigits{2}{3}
\begin{table}[ht]
    \centering
    \caption{Exposure value and execution time for the paths shown in illustrative example of Fig.\ref{fig:illustrative_example}.}
    \begin{tabular}{c|cc}
        \hline
        $\pinit$ & $\boldsymbol{\exposure}$ & \bf Time(s) \\ 
        \hline
        $[0, 4]$ & \n{43.53258} & 1349 \\
        $[0, 8]$ & \n{43.12626} & 138 \\
        $[5, 0]$ & \n{46.75905} & 130 \\
         \hline
    \end{tabular}
    \label{tab:illustrative_results}
    \vspace{-3mm}
\end{table}
}

It is possible to notice that, the execution time for the first trial was \n[s]{1349}, while for the other two sources it was approximately ten times faster. The first path is slower because, in the first run, our method has to compute the value function \eqref{eq:value_original}, which is more computationally costly. However, for all subsequent computations of a \ac{MEP} (for the same $\pgoal$), we can use the previous $\Vfunc[\pos]$ illustrated at Fig.\ref{fig:illustrative_example}.
This is an essential advantage of our technique when compared to the literature since other approaches usually require a complete algorithm reset (e.g., evolutionary algorithms) for every new $\pinit$.


Tab.~\ref{tab:comparison_hga_nfe} presents a comparison between our method and the other three discussed in \cite{Ye2016Hybrid}. 
The parameter $m$ represents the grid resolution used in the algorithm. Small grid cell numbers have less exposure, especially due to the low accuracy of the results. On the other hand, high accuracy grids (with larger $m$) achieves values closer to the true exposure, however, by exponentially increasing time and storage costs \cite{Ye2016Hybrid}.
Our approach is capable of providing the most accurate value of $\exposure$ (optimal), except for some small numerical inaccuracy.

{
\nprounddigits{2}
\npdigits{3}{2}
\begin{table}[ht]
    \centering
    \caption{Comparison between our method and results in Ye et al. \cite{Ye2016Hybrid}.}
    \resizebox{\linewidth}{!}{
    \tabcolsep1mm\small
    \begin{tabular}{c|cc|cc|cc||cc}
        \hline
         & 
        \multicolumn{2}{c|}{\bf HGA-NFE \cite{Ye2016Hybrid}} & 
        \multicolumn{2}{c|}{\bf Grid-based \cite{Ye2016Hybrid}} & 
        \multicolumn{2}{c||}{\bf Voronoi-based \cite{Ye2016Hybrid}} & 
        \multicolumn{2}{c}{\bf Semi-Lagrangian} \\
        \hline
        {\bf $m$} & 
        $\boldsymbol{\exposure}$ & \bf Time(s) & 
        $\boldsymbol{\exposure}$ & \bf Time(s) & 
        $\boldsymbol{\exposure}$ & \bf Time(s) & 
        $\boldsymbol{\exposure}$ & \bf Time(s)\\
        \hline
        40 & \n{42.5916} & \n{36.3427} & \n{57.0373} & \n{42.7584} & \multirow{3}{*}{\n{47.4402}} & \multirow{3}{*}{\n{4.59764}} & \multirow{3}{*}{\bf\n{43.53258}} & \multirow{3}{*}{1349} \\
        80 & \n{43.0125} & \n{83.7606} & \n{55.0518} & \n{178.2976} & & & \\
        100 & \n{43.3752} & \n{132.909} & \n{54.6562} & \n{274.2848} & & & \\ 
         \hline
    \end{tabular}
    }
    \label{tab:comparison_hga_nfe}
    \vspace{-2mm}
\end{table}
}

\subsection{Comparative analysis with the state-of-the-art}


In order to qualitatively evaluate the performance of our Semi-Lagrangian approach, first, we define the sensing function and the field intensity function according to \cite{Binh2019Efficient} (state-of-the-art). The authors have used the probability coverage model with noise for the sensor node, given by Eq.~\eqref{eq:probability_coverage_model_noise}. Assuming the noise energy as a normal distribution with zero mean, $\eta \sim \distribuicao{N}(0, \sigma^2)$, they have simplified the sensing function, such as:
\begin{equation*}
    \sensingModel[\sensorpos, \pos] \approx -\ln \left( 1 - \Qfunc[\dfrac{A - \tfrac{\lambda}{\|\sensorpos - \pos\|^{\mu}}}{\sigma}] \right),
\end{equation*}
\noindent with
\[ \Qfunc[x] = \tfrac{1}{\sqrt{2\pi}} \int_{x}^{\infty} e^{\frac{-t^2}{2}} dt,\]
\noindent where $A = \n{6.0}$ is a threshold detection, $\lambda = \n{100.0}$, $\mu = \n{1.0}$ and $\sigma = \n{1.0}$. 
Still, for the field intensity function, results in \cite{Binh2019Efficient} are based on the all-sensor intensity model \eqref{eq:intensity_allsensors}.

Next, we apply our method to the exact scenarios described in the aforementioned work, whose sensors were deployed into a \n[m]{500} x \n[m]{500} environment. The authors have used three different random distributions to place the nodes: (i) Exponential distribution method, (ii) Uniform distribution method, and (iii) Gaussian distribution method.
They have tested a total of 240 configurations, with 80 different topologies for each of those three distribution methods. The set of topologies also present networks with a number of sensors $\nnodes$ ranging from 30 to 100\footnote{Data available at: Nguyen Thi My, Binh (2020), \emph{``Data for: Efficient Approximation Approaches to Minimal Exposure Path Problem in Probabilistic Coverage Model for Wireless Sensor Networks''}, Mendeley Data, V1, doi: 10.17632/5zh6cc2xww.1}. 
Here, however, for the sake of space, we have simulated experiments for all topologies with 30 and 100 nodes to verify the behavior of our approach subjected to a low and a high $\nnodes$.

All trials receive as input the source position $\pinit = [0, 150]$ and the destination position $\pgoal = [500, 350]$, and they were  executed using Python 3.8.5 on an Intel Core$^{\textrm{TM}}$ i7-7500U CPU \n[GHz]{2.70} x 4 and \n[GB]{16} of RAM under Ubuntu 20.04.


{
\nprounddigits{4}
\npdigits{3}{4}
\begin{table}[ht]
    \centering
    \caption{Comparison with nodes placed using the Exponential distribution method \cite{Binh2019Efficient}.}
    \resizebox{\linewidth}{!}{
    \begin{tabular}{cc|cc|cc|cc}
        \hline
        \multicolumn{2}{c|}{\bf Instance} & 
        \multicolumn{2}{c|}{\bf GB-MEP \cite{Binh2019Efficient}} & 
        \multicolumn{2}{c|}{\bf GA-MEP \cite{Binh2019Efficient}} & 
        \multicolumn{2}{c}{\bf Semi-Lagrangian} \\
        \hline
        $\boldsymbol{\nnodes}$ & \bf Ord & 
        $\boldsymbol{\exposure}$ & \bf Time (s) & 
        \bf Best $\boldsymbol{\exposure}$ & \bf Avg. time(s) & 
        $\boldsymbol{\exposure}$ & \bf Time (s)\\
        \hline
        \multirow{10}{*}{30} & 1 & \n{38.3990} & {0.631} & \n{43.3873} & 331 & \bf\n{26.2863} & 557 \\
         & 2 & \n{0.0402} & {0.695} & \n{0.0401} & 394 & \bf\n{0.02813} & 504 \\
         & 3 & \n{29.8977} & {0.974} & \n{29.5743} & 551 & \bf\n{25.3489} & 517 \\
         & 4 & \n{0.4342} & {0.429} & \n{0.4342} & 247 & \bf\n{0.3288} & 431 \\
         & 5 & \n{1.8524} & {0.784} & \bf\n{1.6056} & 411 & \bf\n{1.6056} & 477 \\
         & 6 & \n{0.0481} & {0.929} & \n{0.0421} & 320 & \bf\n{0.0406} & 724 \\
         & 7 & \n{1.0559} & {1.519} & \n{1.0604} & 313 & \bf\n{0.6674} & 437 \\
         & 8 & \n{0.0027} & {0.427} & \n{0.0023} & 224 & \bf\n{0.0021} & 426 \\
         & 9 & \n{0.0114} & {2.487} & \n{0.0109} & 256 & \bf\n{0.0102} & 443 \\
         & 10 & \n{0.1734} & {0.849} & \n{0.1280} & 385 & \bf\n{0.1239} & 491 \\
         \hline
        \multirow{10}{*}{100} & 1 & \n{5.2302} & \n{1.684} & \bf\n{4.0037} & 934 & \bf\n{4.0037} & 3085 \\
         & 2 & \n{1035.9792} & {7.304} & \n{1034.8402} & 1050 & \bf\n{903.3848} & 2619 \\
         & 3 & \n{2.4163} & {1.324} & \n{2.4069} & 691 & \bf\n{2.3007} & 1959 \\
         & 4 & \n{1.2142} & {1.066} & \n{1.1727} & 807 & \bf\n{1.0458} & 2210 \\
         & 5 & \n{6.6685} & {1.306} & \n{6.2369} & 888 & \bf\n{6.1064} & 4068 \\
         & 6 & \n{7.8457} & {1.189} & \n{6.4425} & 1030 & \bf\n{5.5539} & 2882 \\
         & 7 & \n{3.8835} & {3.012} & \n{4.3673} & 1121 & \bf\n{2.8832} & 1979 \\
         & 8 & \n{61.0651} & {1.093} & \n{60.3105} & 917 & \bf\n{51.5410} & 2060 \\
         & 9 & \n{138.5203} & {1.099} & \n{120.4673} & 1348 & \bf\n{105.7741} & 2413 \\
         & 10 & \n{1.8611} & {1.794} & \n{1.4176} & 667 & \bf\n{1.3986} & 2030 \\
         \hline
    \end{tabular}
    }
    \label{tab:comparison_exponential}
\end{table}
}

{
\nprounddigits{4}
\npdigits{3}{4}
\begin{table}[ht]
    \centering
    \caption{Comparison with nodes placed using the Uniform distribution method \cite{Binh2019Efficient}.}
    \resizebox{\linewidth}{!}{
    \begin{tabular}{cc|cc|cc|cc}
        \hline
        \multicolumn{2}{c|}{\bf Instance} & 
        \multicolumn{2}{c|}{\bf GB-MEP \cite{Binh2019Efficient}} & 
        \multicolumn{2}{c|}{\bf GA-MEP \cite{Binh2019Efficient}} & 
        \multicolumn{2}{c}{\bf Semi-Lagrangian} \\
        \hline
        $\boldsymbol{\nnodes}$ & \bf Ord & 
        $\boldsymbol{\exposure}$ & \bf Time (s) & 
        \bf Best $\boldsymbol{\exposure}$ & \bf Avg. time(s) & 
        $\boldsymbol{\exposure}$ & \bf Time (s)\\
        \hline
        \multirow{10}{*}{30} & 1 & \n{0.0053} & {3.094} & \n{0.0047} & 515 & \bf\n{0.00458} & 700 \\
         & 2 & \n{0.0117} & {2.444} & \n{0.0109} & 397 & \bf\n{0.01076} & 712 \\
         & 3 & \n{0.0082} & {1.513} & \n{0.0079} & 371 & \bf\n{0.00765} & 701 \\
         & 4 & \n{0.1768} & {5.414} & \n{0.1749} & 444 & \bf\n{0.16514} & 1353 \\
         & 5 & \n{0.0019} & {0.786} & \bf\n{0.0017} & 290 & \bf\n{0.00172} & 600 \\
         & 6 & \n{0.0146} & {4.013} & \n{0.0142} & 393 & \bf\n{0.01347} & 528 \\
         & 7 & \n{0.0026} & {0.477} & \bf\n{0.0021} & 317 & \bf\n{0.00206} & 506 \\
         & 8 & \n{0.0033} & {1.535} & \n{0.0073} & 326 & \bf\n{0.00277} & 937 \\
         & 9 & \n{0.0128} & {2.504} & \n{0.0118} & 378 & \bf\n{0.01029} & 603 \\
         & 10 & \n{0.0050} & {0.733} & \n{0.0042} & 493 & \bf\n{0.00400} & 784 \\
         \hline
        \multirow{10}{*}{100} & 1 & \n{4.1743} & {9.036} & \n{5.3521} & 1160 & \bf\n{3.35556} & 3215 \\
         & 2 & \n{5.3815} & {2.811} & \n{4.2433} & 1054 & \bf\n{4.20165} & 2095 \\
         & 3 & \n{20.5244} & {9.3} & \n{20.4461} & 1110 & \bf\n{19.73583} & 1786 \\
         & 4 & \n{0.9734} & {5.185} & \n{0.8304} & 1040 & \bf\n{0.78949} & 2198 \\
         & 5 & \n{1.2987} & {3.98} & \n{1.2134} & 1066 & \bf\n{1.10248} & 2811 \\
         & 6 & \n{1.9280} & {4.682} & \n{4.8181} & 1268 & \bf\n{1.55756} & 2794 \\
         & 7 & \n{0.6936} & {3.797} & \n{1.8955} & 1191 & \bf\n{0.61978} & 3191 \\
         & 8 & \n{3.4418} & {5.836} & \n{3.1789} & 1263 & \bf\n{2.93320} & 2792 \\
         & 9 & \n{106.9770} & {9.965} & \n{106.5400} & 1272 & \bf\n{101.05899} & 2621 \\
         & 10 & \n{5.1870} & {5.716} & \n{8.0124} & 1248 & \bf\n{4.75318} & 3047\\
         \hline
    \end{tabular}
    }
    \label{tab:comparison_uniform}
\end{table}
}

{
\nprounddigits{4}
\npdigits{3}{4}
\begin{table}[ht]
    \centering
    \caption{Comparison with nodes placed using the Gaussian distribution method \cite{Binh2019Efficient}.}
    \resizebox{\linewidth}{!}{
    \begin{tabular}{cc|cc|cc|cc}
        \hline
        \multicolumn{2}{c|}{\bf Instance} & 
        \multicolumn{2}{c|}{\bf GB-MEP \cite{Binh2019Efficient}} & 
        \multicolumn{2}{c|}{\bf GA-MEP \cite{Binh2019Efficient}} & 
        \multicolumn{2}{c}{\bf Semi-Lagrangian} \\
        \hline
        $\boldsymbol{\nnodes}$ & \bf Ord & 
        $\boldsymbol{\exposure}$ & \bf Time (s) & 
        \bf Best $\boldsymbol{\exposure}$ & \bf Avg. time(s) & 
        $\boldsymbol{\exposure}$ & \bf Time (s)\\
        \hline
        \multirow{10}{*}{30} & 1 & \n{0.0008} & {0.697} & \bf\n{0.0009} & 247 & \bf\n{0.0009} & 704 \\
         & 2 & \n{0.0035} & {0.6} & \n{0.0043} & 322 & \bf\n{0.0029} & 969 \\
         & 3 & \bf\n{0.0001} & {0.414} & \bf\n{0.0001} & 230 & \bf\n{0.0001} & 640 \\
         & 4 & \n{0.0111} & {0.767} & \n{0.0302} & 371 & \bf\n{0.0090} & 844 \\
         & 5 & \n{0.0014} & {0.924} & \bf\n{0.0012} & 307 & \bf\n{0.0012} & 850 \\
         & 6 & \n{0.0035} & {1.379} & \n{0.0037} & 233 & \bf\n{0.0034} & 751 \\
         & 7 & \bf\n{0.0031} & {0.687} & \n{0.0033} & 302 & \bf\n{0.0031} & 1092 \\
         & 8 & \n{0.0040} & {0.691} & \bf\n{0.0031} & 318 & \bf\n{0.0031} & 845 \\
         & 9 & \bf\n{0.0012} & {1.075} & \n{0.0013} & 240 & \bf\n{0.0012} & 1393 \\
         & 10 & \bf\n{0.0002} & {0.47} & \bf\n{0.0002} & 255 & \bf\n{0.0002} & 615 \\
         \hline
        \multirow{10}{*}{100} & 1 & \n{0.3134} & {2.195} & \n{0.6915} & 960 & \bf\n{0.2452} & 3865 \\
         & 2 & \n{1.1851} & {5.021} & \n{1.4138} & 811 & \bf\n{1.0962} & 2965 \\
         & 3 & \bf\n{0.0009} & {1.621} & \n{0.0010} & 434 & \bf\n{0.0009} & 5343 \\
         & 4 & \n{0.7224} & {1.274} & \n{0.5442} & 643 & \bf\n{0.5029} & 3549 \\
         & 5 & \n{0.1625} & {1.577} & \n{0.4453} & 986 & \bf\n{0.1334} & 3725 \\
         & 6 & \n{0.0585} & {1.176} & \bf\n{0.0579} & 772 & \bf\n{0.0579} & 2591 \\
         & 7 & \n{2.0298} & {3.22} & \n{2.3785} & 869 & \bf\n{1.5953} & 3954 \\
         & 8 & \n{0.1247} & {1.772} & \n{0.7191} & 798 & \bf\n{0.0909} & 4074 \\
         & 9 & \n{0.1766} & {1.404} & \n{0.1515} & 553 & \bf\n{0.1511} & 2325 \\
         & 10 & \n{0.1049} & {1.3} & \n{0.2841} & 727 & \bf\n{0.0871} & 4044 \\
         \hline
    \end{tabular}
    }
    \label{tab:comparison_gaussian}
\end{table}
}

Tables \ref{tab:comparison_exponential}, \ref{tab:comparison_uniform} and \ref{tab:comparison_gaussian} compile the results for the Exponential, Uniform and Gaussian distribution methods, respectively.
When comparing the minimum exposure value of the GA-MEP and the GB-MEP with our approach, we can see that the Semi-Lagrangian method always provides results that are smaller (or at most equivalent) to those in \cite{Binh2019Efficient}.

Here it is important to say that the exposure values for the GA-MEP given in the tables are the best ones obtained after $H$ runs of their main algorithm. Since it is based on meta-heuristics, there are no guarantees that the optimal values are always reached.
Also, the execution times shown for the GA-MEP are the average values of all trials, such that the real-time spent to reached the best $\exposure$ is $H$ times larger than those presented.
Consequently, we can claim that, based on the experiments, for $H \geq 3$ the Semi-Lagrangian approach is faster than the GA-MEP in most of the scenarios.

Table \ref{tab:comparison_percentage} compiles the percentages of improvement provided by our method over the minimum values given by both algorithms of \cite{Binh2019Efficient}. 
The highest improvements have been reached under Exponential (30) and Gaussian (100). Also, the total average of the Exponential distribution is higher than for other distributions, indicating that the GA-MEP is more susceptible to highly concentrated nodes. The average improvement for all scenarios was approximately \n[\%]{10}.

{
\nprounddigits{1}
\npdigits{3}{1}
\begin{table}[ht]
    \centering
    \caption{Percentages of improvement of our technique over \cite{Binh2019Efficient}.}
    \begin{tabular}{c|cc|cc|cc}
        \hline
        \bf  & 
        \multicolumn{2}{c|}{\bf Exponential} & 
        \multicolumn{2}{c|}{\bf Uniform} & 
        \multicolumn{2}{c}{\bf Gaussian} \\
        \hline
        \bf Ord & 
        \bf 30 & \bf 100 & 
        \bf 30 & \bf 100 & 
        \bf 30 & \bf 100 \\
        \hline
        1 & \n{31.54} & \n{0.00} & \n{2.55} & \bf\n{19.61} & \n{0.00} & \n{21.75} \\
        2 & \n{29.85} & \n{12.70} & \n{1.28} & \n{0.98} & \n{18.00} & \n{7.50} \\
        3 & \n{14.29} & \n{4.41} & \n{3.16} & \n{3.47} & \n{0.00} & \n{0.00} \\
        4 & \n{24.27} & \n{10.82} & \n{5.58} & \n{4.93} & \bf\n{19.19} & \n{7.58} \\
        5 & \n{0.00} & \n{2.09} & \n{0.00} & \n{9.14} & \n{0.00} & \n{17.94} \\
        6 & \n{3.61} & \n{13.79} & \n{5.14} & \n{19.21} & \n{2.86} & \n{0.09} \\
        7 & \bf\n{36.79} & \bf\n{25.76} & \n{1.90} & \n{10.64} & \n{0.00} & \n{21.41} \\
        8 & \n{10.00} & \n{14.54} & \bf\n{16.06} & \n{7.73} & \n{1.29} & \bf\n{27.12} \\
        9 & \n{6.42} & \n{12.20} & \n{12.80} & \n{5.14} & \n{2.50} & \n{0.26} \\
        10 & \n{3.20} & \n{1.34} & \n{4.76} & \n{8.36} & \n{5.00} & \n{17.00} \\
        \hline
        \bf Avg. & \n{16.00} & \n{9.77} & \n{5.32} & \n{8.92} & \n{4.88} & \n{12.06} \\
        \bf Std. & \n{13.49} & \n{7.92} & \n{5.17} & \n{6.22} & \n{7.41} & \n{10.20}\\
         \hline
    \end{tabular}
    \label{tab:comparison_percentage}
\end{table}
}

Fig.~\ref{fig:comparison_Binh2019Efficient} present some comparative tests compiled in the previous tables. Here, for each distribution (Exponential, Uniform, and Gaussian) and each $\nnodes = \{30, 100\}$, we have chosen the instance with the best percentage improvement reached by our method. Blue and green lines represent the \ac{MEPs} obtained by the GA-MEP and the GB-MEP, respectively, while the red line represents our Semi-Lagrangian approach.
Figures \ref{subfig:comp_30_exponential}-\ref{subfig:comp_30_gaussian} illustrate cases with few nodes, and figures \ref{subfig:comp_100_exponential}-\ref{subfig:comp_100_gaussian} are cases with a large number of sensors.
Here, it is possible to see that most of the cases with greater improvement are those where the GA-MEP fails to approximate the optimal solution. And even when it doesn't happen (Fig~\ref{subfig:comp_30_exponential}), there is a significant difference between our result and those provide by \cite{Binh2019Efficient}.

\begin{figure}[!t]
    \centering
    \subfigure[30(7) nodes, Exponential: \n{36.8}\%.]{
        \includegraphics[width=\threefig\linewidth]{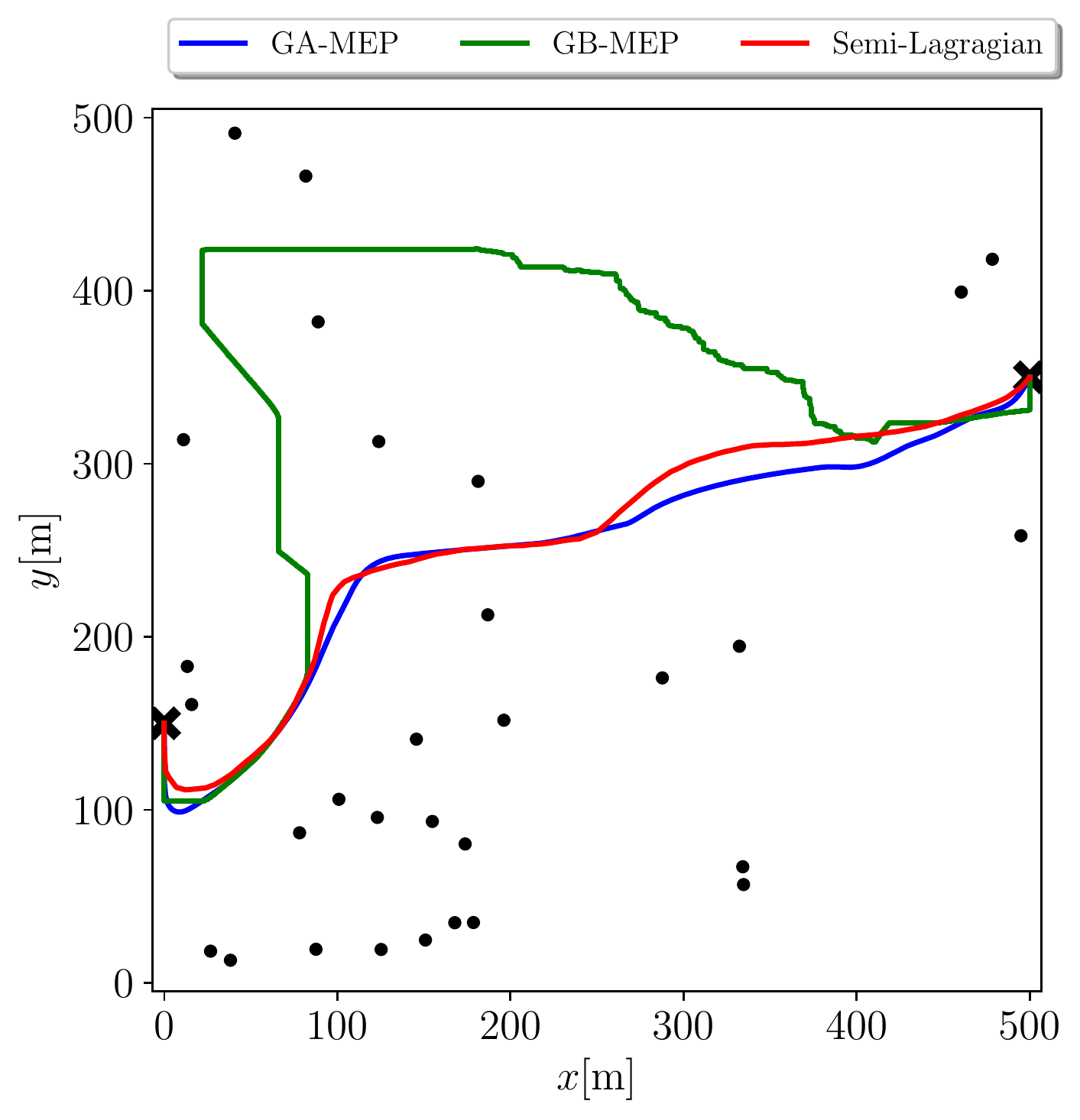}
        \label{subfig:comp_30_exponential}
    }
    \subfigure[30(8) nodes, Uniform: \n{16.1}\%.]{
        \includegraphics[width=\threefig\linewidth]{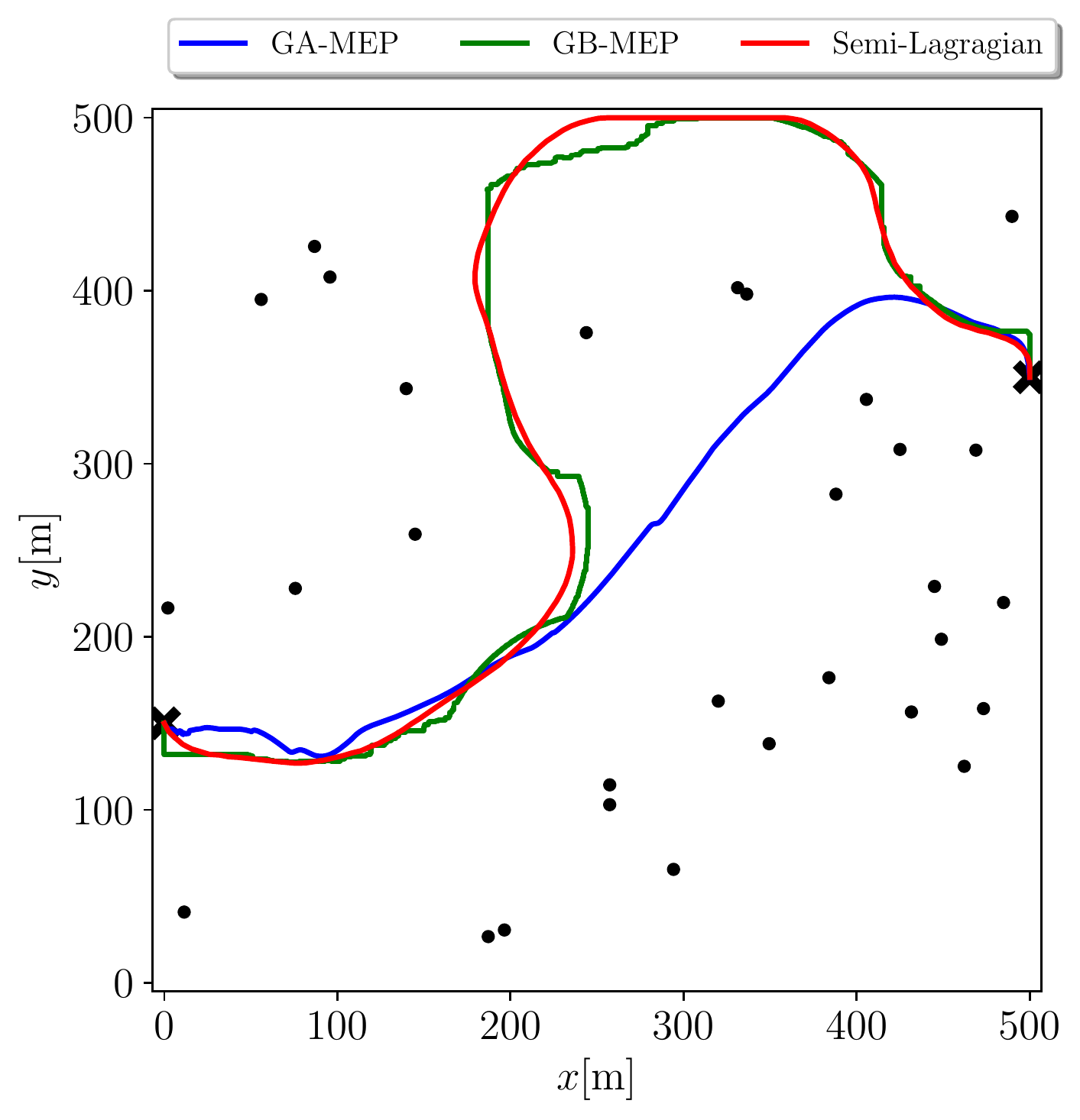}
        \label{subfig:comp_30_uniform}
    }
    \subfigure[30(4) nodes, Gaussian: \n{19.2}\%.]{
        \includegraphics[width=\threefig\linewidth]{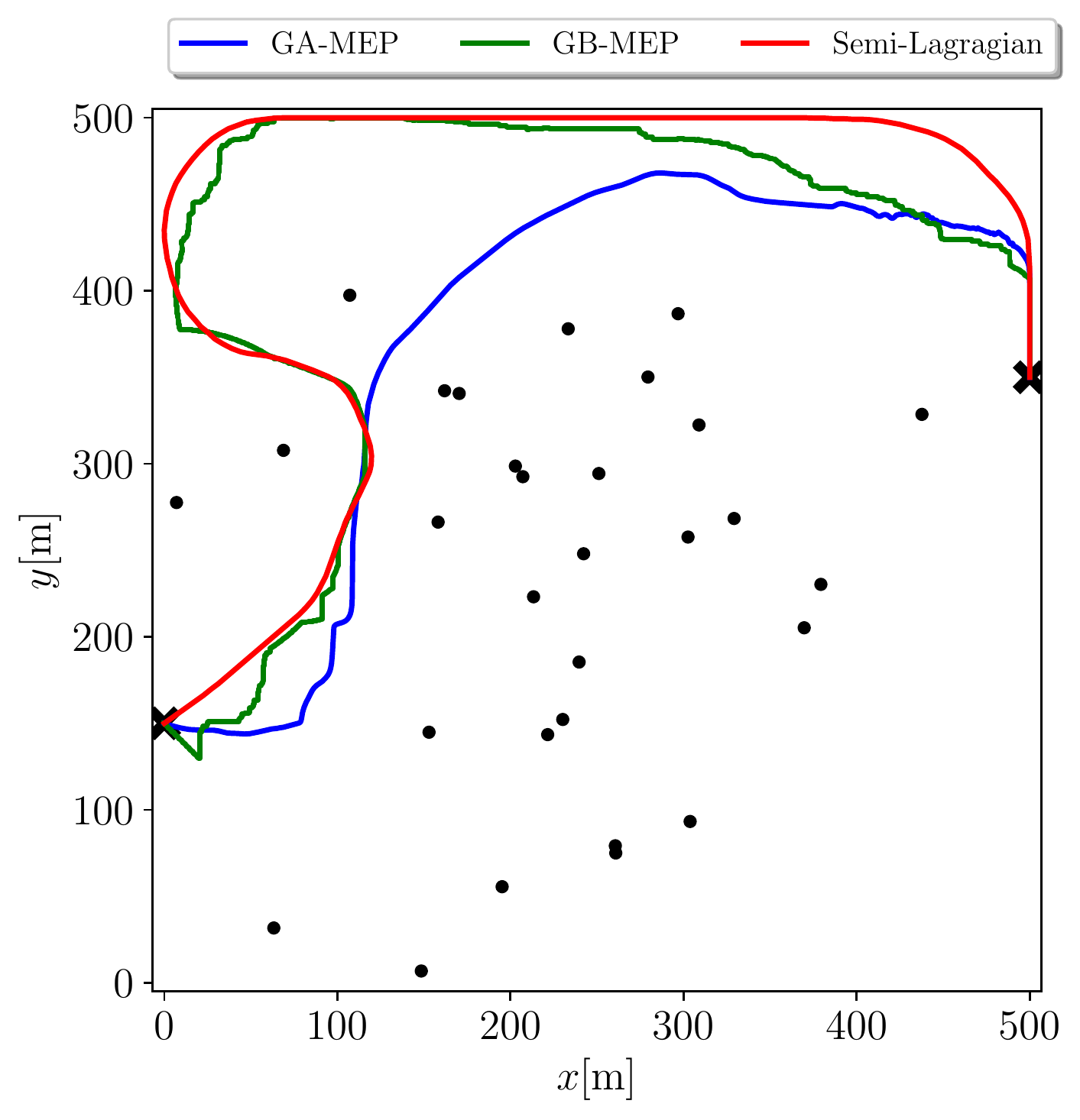}
        \label{subfig:comp_30_gaussian}
    }
    \subfigure[100(7) nodes, Exponential: \n{25.8}\%.]{
        \includegraphics[width=\threefig\linewidth]{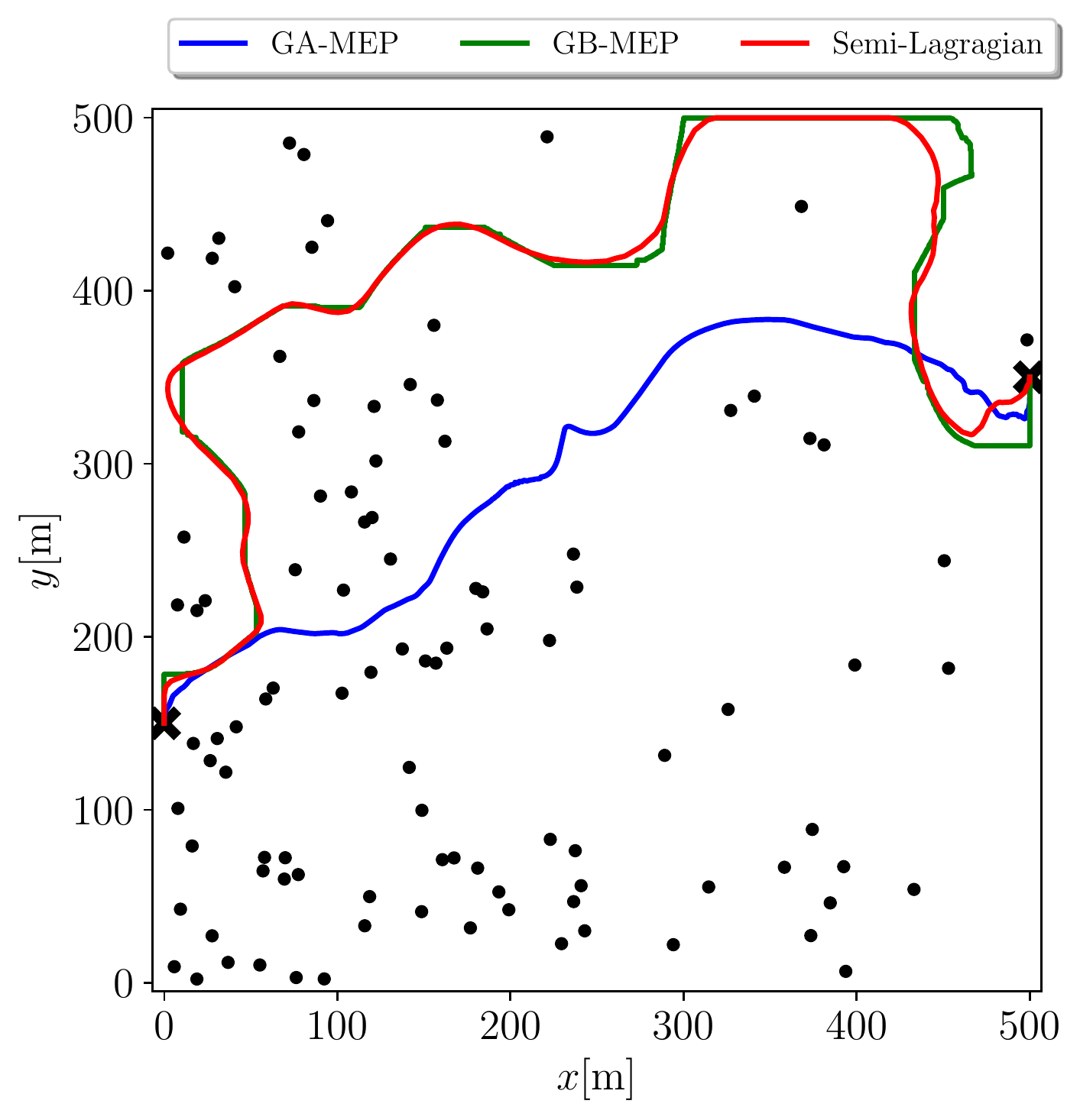}
        \label{subfig:comp_100_exponential}
    }
    \subfigure[100(1) nodes, Uniform: \n{19.6}\%.]{
        \includegraphics[width=\threefig\linewidth]{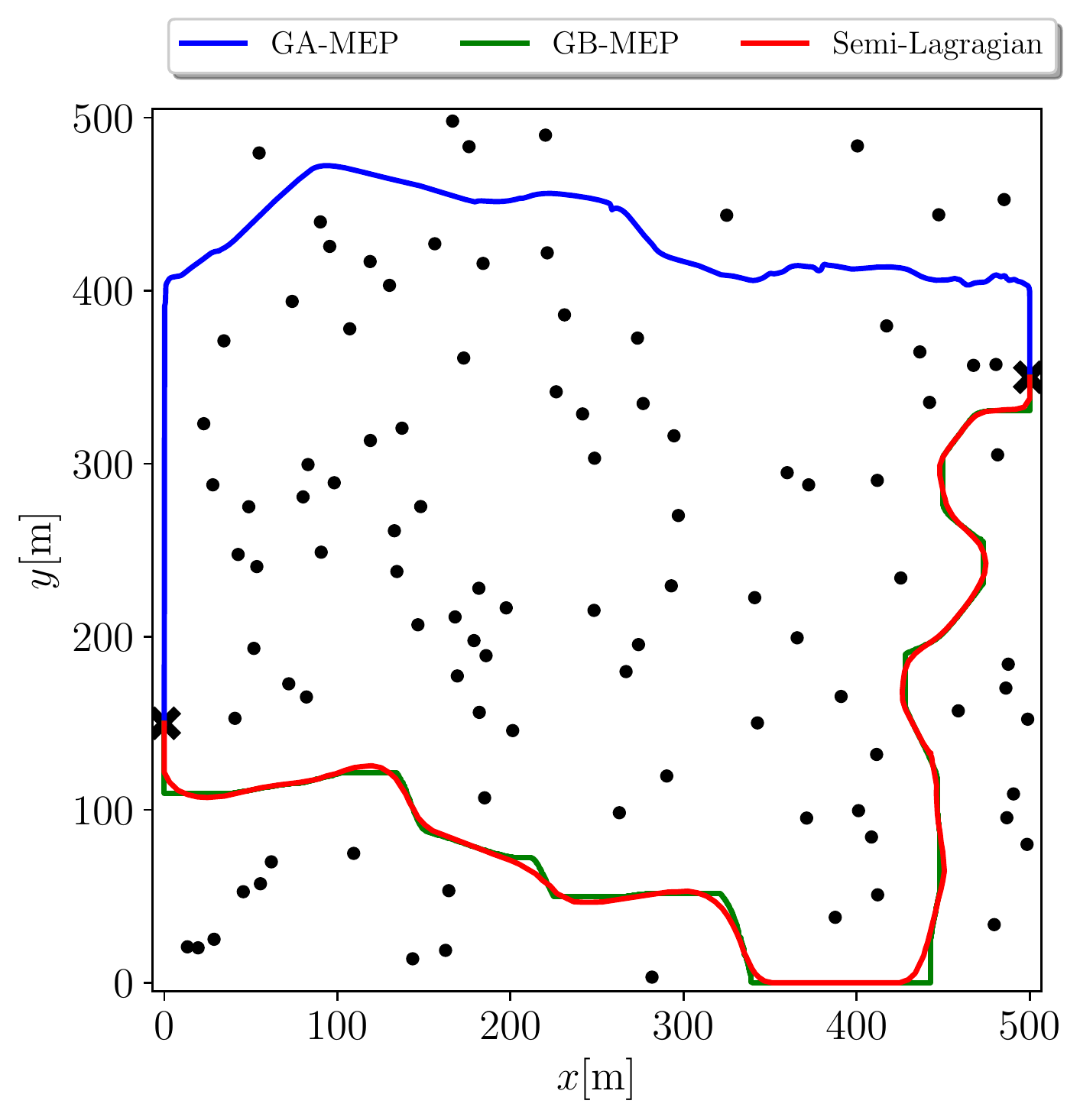}
        \label{subfig:comp_100_uniform}
    }
    \subfigure[100(8) nodes, Gaussian: \n{27.1}\%.]{ 
        \includegraphics[width=\threefig\linewidth]{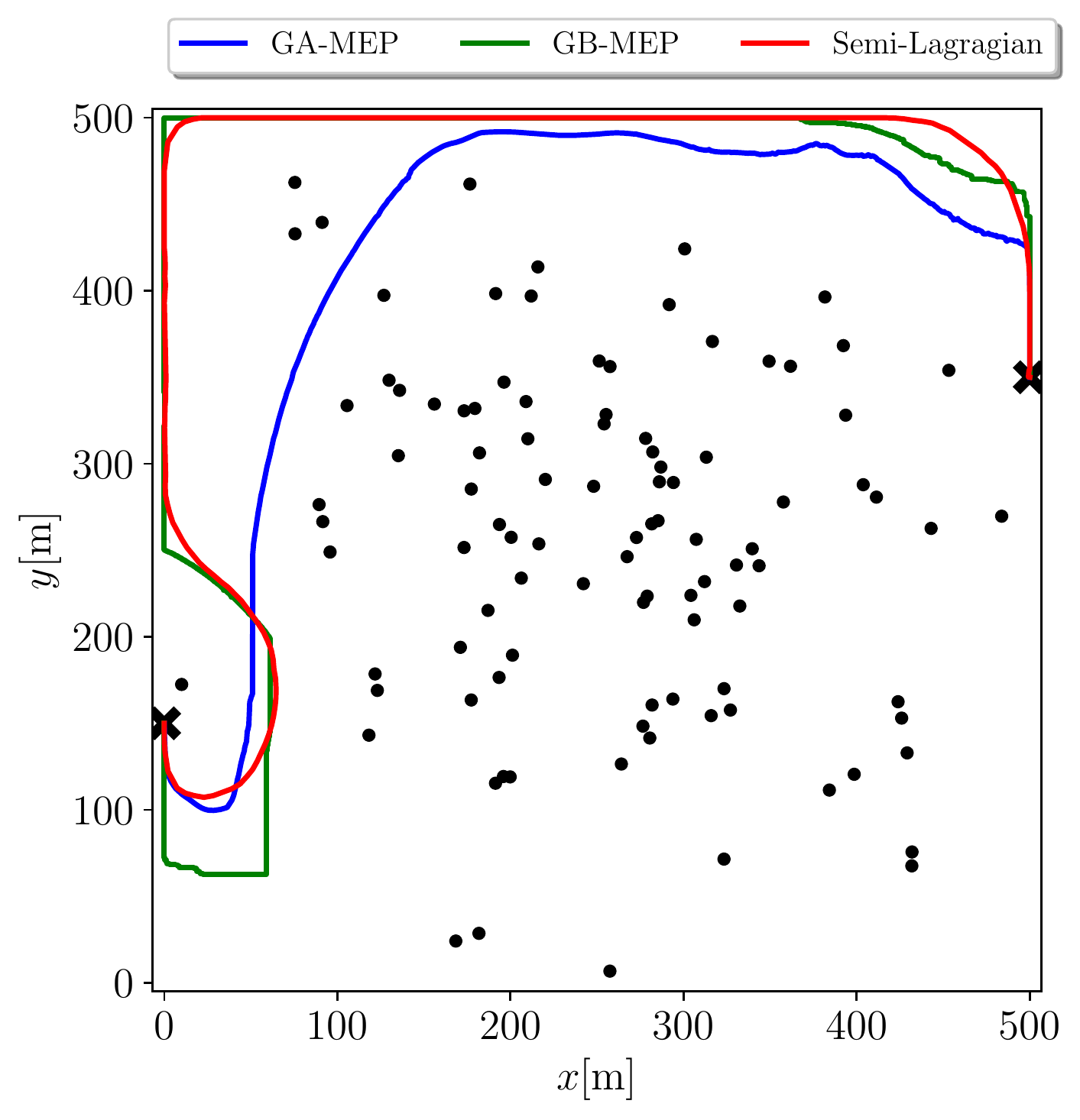}
        \label{subfig:comp_100_gaussian}
    }
    \caption{Best results of our approach in comparison with \cite{Binh2019Efficient} for each set of topologies.}
    \label{fig:comparison_Binh2019Efficient}
\end{figure}

It is also possible to notice that the highest average value was obtained in the Exponential configuration with 100 nodes. It can be explained by the concentration of sensors near the source position.

\subsection{Cluttered environments}

As previously said, the value function $\Vfunc[\cdot]$ allows the modeling of geometric constraints, given by the boundary conditions at \eqref{eq:boundary_conditions}. Therefore, it is possible to incorporate obstacles to the searching space, which makes the \ac{MEP} problem more attainable to real-world scenarios.

In Fig.~\ref{fig:obstacles}, we progressively added obstacles to the base scenario (Fig.~\ref{fig:illustrative_example}). Fig.~\ref{subfig:Ye2016_cave0.pdf} shows the original \ac{MEP} starting from $\pinit = [0, 4]$, whose exposure was previously computed as {\nprounddigits{2}\n{43.53258}} (Tab.~\ref{tab:comparison_hga_nfe}). 
When two obstacles are added (Fig.~\ref{subfig:Ye2016_cave1.pdf}), a new path is computed and $\exposure$ is increased to {\nprounddigits{2}\n{50.78452}}.
After the incorporation of five other obstacles (Fig.~\ref{subfig:Ye2016_cave2.pdf}), the path exposure reaches {\nprounddigits{2}\n{54.73732}}.
One can see that as more geometric constraints (i.e., non-navigable regions) are incorporated into the environment, the agent's exposure to the sensor network might remain the same or more commonly increase.


\begin{figure}[!t]
    \centering
    \nprounddigits{3}
    \subfigure[Empty environment: $\exposure = \n{43.53258}$.]{
        \includegraphics[width=\threefig\linewidth]{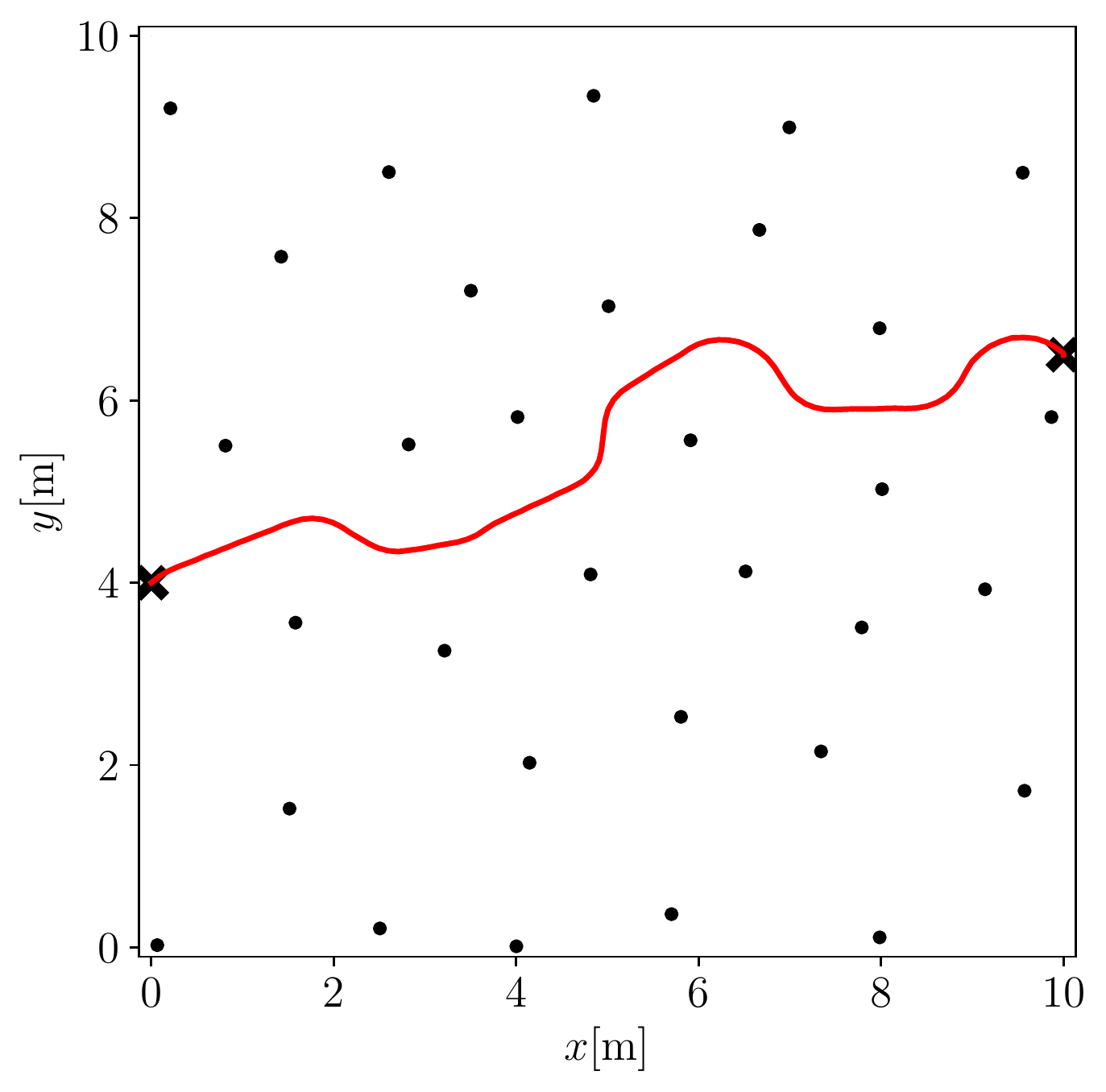}
        \label{subfig:Ye2016_cave0.pdf}
    }
    \subfigure[Two added obstacles: $\exposure = \n{50.78452}$.]{
        \includegraphics[width=\threefig\linewidth]{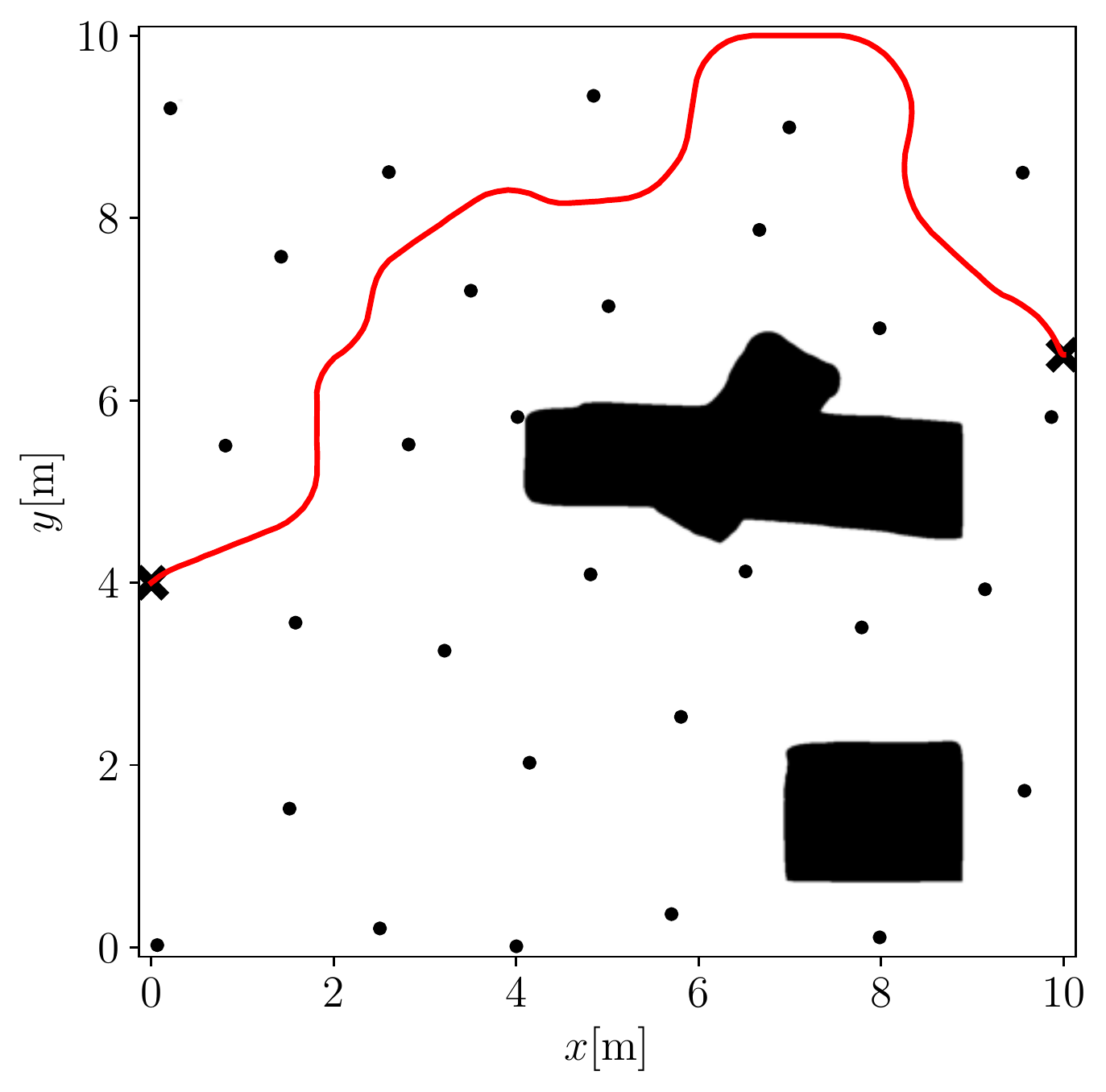}
        \label{subfig:Ye2016_cave1.pdf}
    }
    \subfigure[Five more obstacles added: $\exposure = \n{54.73732}$.]{
        \includegraphics[width=\threefig\linewidth]{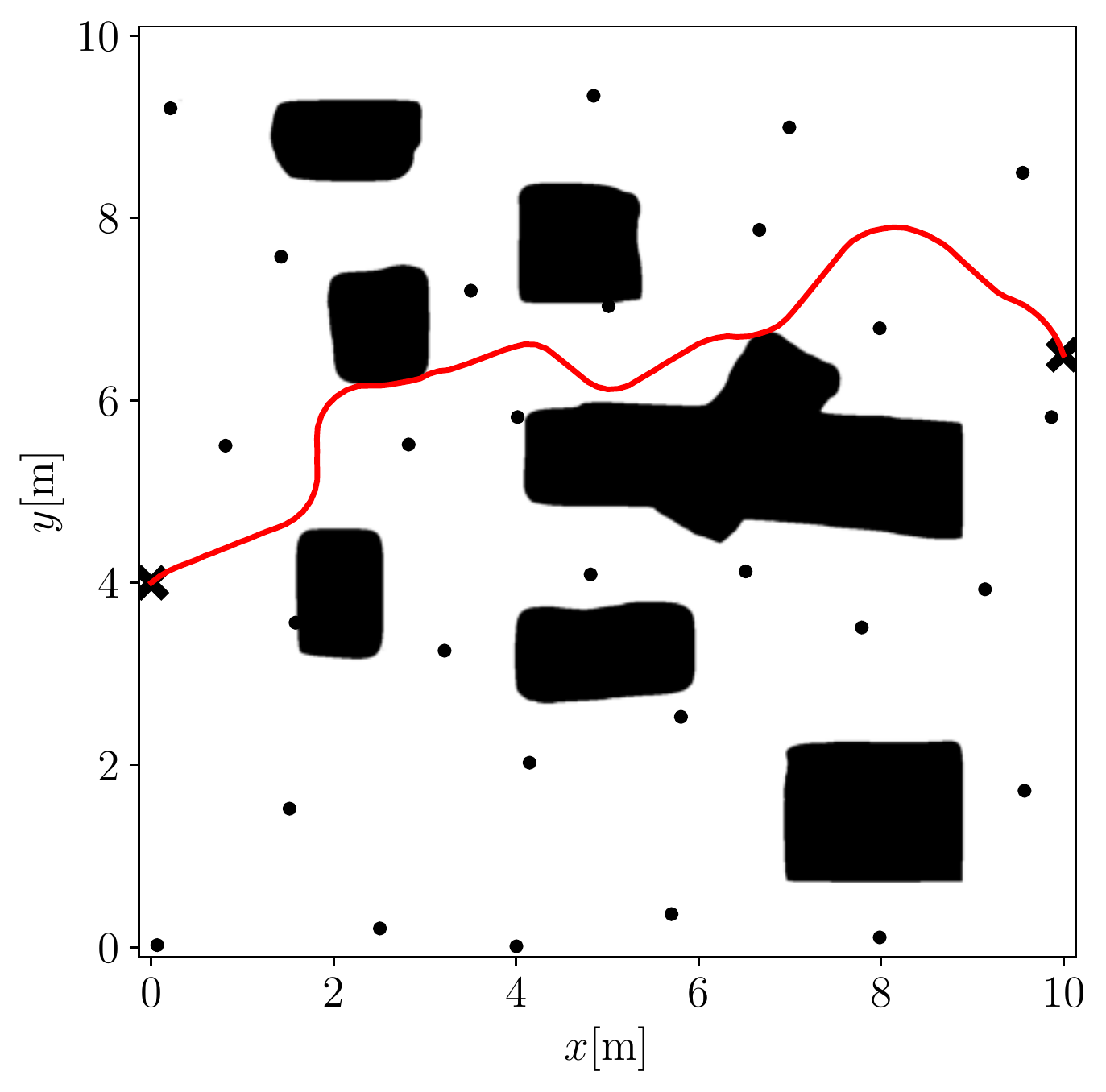}
        \label{subfig:Ye2016_cave2.pdf}
    }
    \caption{\protect\ac{MEP} computed by our method in cluttered environments: a) empty scenario discussed in Sec.~\ref{subsec:illustrative}; b) two obstacles have been added, forcing the method to modify the \protect\ac{MEP}; c) five more obstacles have been incorporated to the space, resulting in an even higher final exposure.}
    \label{fig:obstacles}
\end{figure}

\subsection{Heterogeneous networks}

Finally, we applied our method to a sensor field composed of heterogeneous nodes. We considered the same scenario described in Sec.~\ref{subsec:illustrative}, however, here we have randomly chosen the parameter $\lambda$, as 1 or 3, for the sensing model \eqref{eq:attenuated_disk_coverage_model}.


Fig.~\ref{fig:heterogeneous} presents the \ac{MEP} computed for the heterogeneous network. We highlight that the dashed circles do not represent limits to the detection range of the sensors, but they only serve to illustrate the heterogeneity of the nodes.


\begin{figure}[!t]
    \centering
    \nprounddigits{3}
    \includegraphics[width=.45\linewidth]{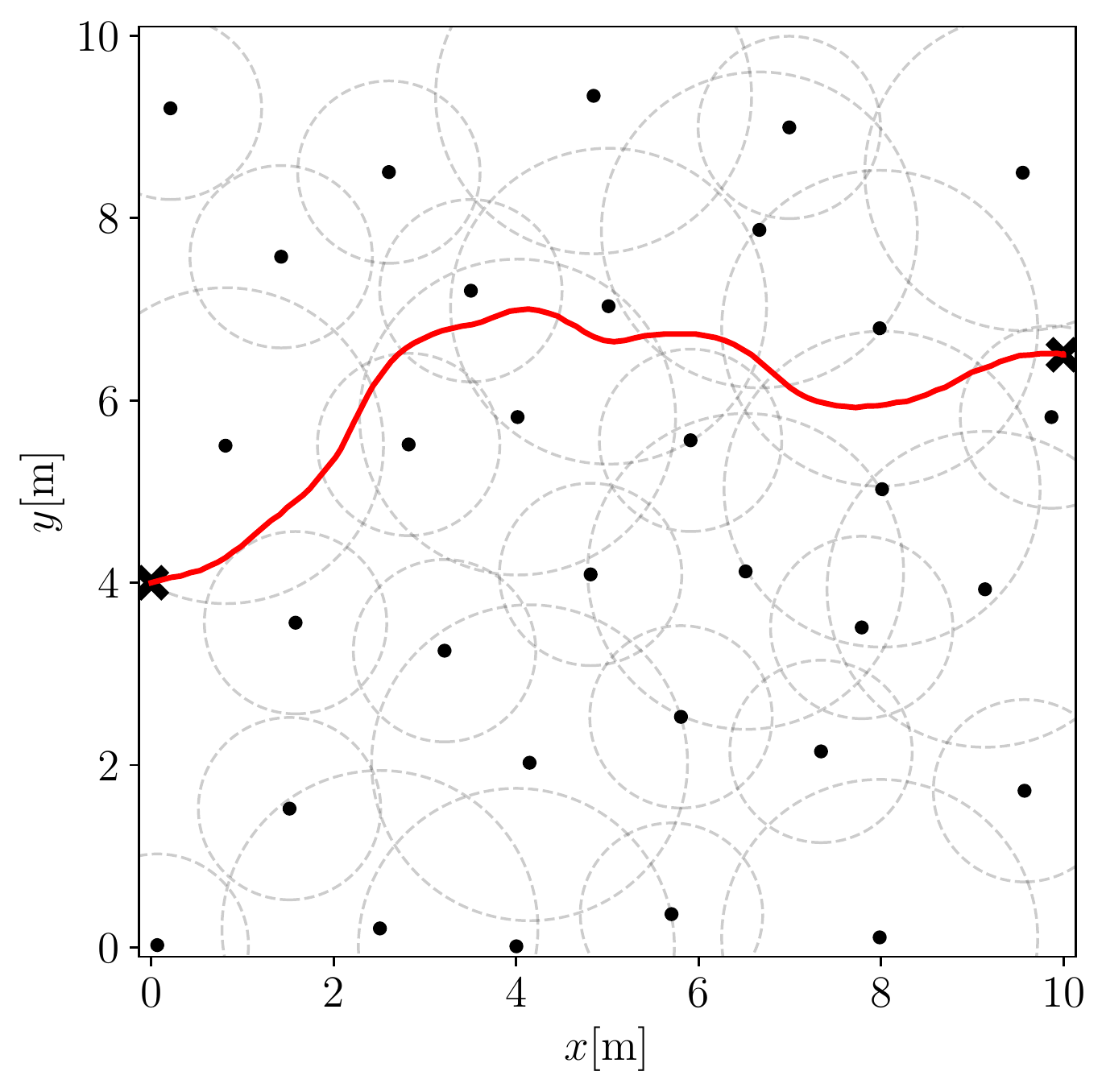}
    \caption{\protect\ac{MEP} computed for a scenario with a heterogeneous sensor network: $\exposure = \n{32.03156}$ and execution time of 1263 s.}
    \label{fig:heterogeneous}
\end{figure}

\section{Conclusion and future work}
\label{sec:conclusion}

The use of \ace{WSNs} is continuously increasing in several civilian and military applications. In this context, the \ace{MEP} between two arbitrary positions in the sensing field represents the path that minimizes the likelihood of a moving target being detected. Finding the \ac{MEP} is critical since the exposure allows estimating the coverage quality of a known deployed \ac{WSN}.

In this paper, we proposed an algorithm based on policy iteration for determining an optimal control solution. This novel approach solves the \ac{MEP} problem ensuring convergence to the optimal solution, given results that overcame the state-of-the-art in all instances, with an average improvement of about \n[\%]{10} over available benchmarks.
Our formulation allows us to incorporate the target's dynamics and tackle geometric constraints (such as obstacles), as well as many different sensing models and intensity functions for homogeneous and heterogeneous network topologies.
The method is faster than the best one in the literature, even when employed for a very large number of nodes.

As future work, we intend to extend our method to larger search spaces, where it is possible to incorporate nonholonomic constraints to the target, three-dimensional environments, and directional sensing models (such as cameras), among others. To do this, it is imperative to improve the efficiency of the algorithm, possibly resorting to meta-heuristics and numerical approximations. 
We also expect to deal with dynamic scenarios, where node sensing functions vary not only with the sensors' location but also along time.


\bibliographystyle{elsarticle-num}

\bibliography{bibliography}

{
\footnotesize
\bigskip
\noindent \textbf{Armando Alves Neto} received the B.S.E. degree in Automation and Control Engineering from the Universidade Federal de Minas Gerais in 2006, and S.M. and Ph.D. degrees in Computer Science from UFMG in 2008 and 2012, respectively. He is an Assistant Professor at the Department of Electronic Engineering at UFMG. Research interests include real-time motion planning, multi-agent control, robust control, and collision avoidance strategies.

\medskip

\noindent\textbf{Victor Costa da Silva Campos} is a Professor at the Department of Electronics Engineering at Universidade Federal de Minas Gerais (UFMG) since 2018. He received the B.Eng. degree in Control and Automation Engineering from UFMG in 2009, and the M.Sc. and PhD degrees in Electrical Engineering also from UFMG in 2011 and 2015, respectively. His research interests include robust control, adaptive control, Takagi-Sugeno fuzzy systems, Computational Intelligence applied to control systems and motion planning.

\medskip

\noindent\textbf{Douglas Guimarães Macharet} is an Assistant Professor at the Department of Computer Science (DCC) of the Universidade Federal de Minas Gerais (UFMG). He received an M.Sc. and D.Sc. degrees in Computer Science from the same university in 2009 and 2013, respectively. He is with the Computer Vision and Robotics Laboratory (VeRLab), and his main research interests are in mobile robotics, focusing on motion planning and navigation, multi-robot systems, and swarm robotics and human-robot interaction.
}
\end{document}